\newtheorem{proposition}{Proposition}
\newtheorem{theorem}{Theorem}
\newtheorem{lemma}{Lemma}
\useunder{\uline}{\ul}{}
\title{Geometric Knowledge Distillation:\\ Topology Compression for Graph Neural Networks}
\begin{document}

\author{
  Chenxiao Yang, Qitian Wu, Junchi Yan\thanks{Junchi Yan is the correspondence author who is also with Shanghai AI Laboratory.}\\
  Department of Computer Science and Engineering\\
  MoE Key Lab of Artificial Intelligence
  \\
  Shanghai Jiao Tong University\\
  \texttt{\{chr26195,echo740,yanjunchi\}@sjtu.edu.cn} 
}
\maketitle

\begin{abstract}
We study a new paradigm of knowledge transfer that aims at encoding graph topological information into graph neural networks (GNNs) by distilling knowledge from a teacher GNN model trained on a complete graph to a student GNN model operating on a smaller or sparser graph. To this end, we revisit the connection between thermodynamics and the behavior of GNN, based on which we propose Neural Heat Kernel (NHK) to encapsulate the geometric property of the underlying manifold concerning the architecture of GNNs. A fundamental and principled solution is derived by aligning NHKs on teacher and student models, dubbed as Geometric Knowledge Distillation. We develop non-parametric and parametric instantiations and demonstrate their efficacy in various experimental settings for knowledge distillation regarding different types of privileged topological information and teacher-student schemes.
\end{abstract}

\section{Introduction}
Modern graph neural networks (GNNs)~\cite{kipf2016semi,velivckovic2018graph,wu2019simplifying} have shown remarkable performance in learning representations for structured instances. From the perspective of geometric deep learning~\cite{bronstein2017geometric,bronstein2021geometric, monti2017geometric}, part of the achievement of GNNs can be attributed to their implementation of the permutation invariance property as \emph{geometric priors}~\footnote{Geometric priors originally refer to the geometric principles naturally encoded in deep learning architectures, e.g., translational symmetry for CNNs, permutation invariance for GNNs.} into the architecture design. Nevertheless, in practice, GNNs highly rely on graph topology, as essential input information, to explore the relational knowledge implicit in interactions of instance pairs throughout the entire message passing process, termed as \emph{geometric knowledge} in this paper. As advances in generalized distillation~\cite{lopez2015unifying,vapnik2015learning} reveal the possibility of encoding input features into model construction, natural questions arise as to:

\emph{Is it possible, and if so, how can we encode graph topology as a special type of `geometric prior' into a GNN model, such that the model could precisely capture the underlying geometric knowledge even without full graph topology as input?}

In specific, we are interested in the following \emph{geometric knowledge transfer} problem: a GNN model (with node-specific outputs for node-level prediction~\cite{hu2020open}) is exposed with a partial graph, which is a subset of the complete graph. Formally speaking, we have notations:
{\begin{equation}
    \mathcal G = \{\mathcal V, \mathcal E\}\mbox{ (partial graph)}, \; \tilde{\mathcal G} = \{\tilde{\mathcal V}, \tilde{\mathcal E}\}\mbox{ (complete graph)}, \;\mbox{where }\mathcal V \subseteq \tilde{\mathcal V}, \mathcal E \subseteq \{\mathcal{V} \times \mathcal{V}\} \cap \tilde{\mathcal E}.
\end{equation}}
\hspace{-3pt}Our goal is to transfer or encode geometric knowledge extracted from $\tilde{\mathcal G}$ to the target GNN model that is only aware of $\mathcal G$. Studying this problem is also of much practical value. As a non-exhaustive list of applications: improving efficiency without compromising on effectiveness for coarsened graphs~\cite{fahrbach2020faster,jin2020graph,zhang2021graph}, privacy constrained scenarios in social recommenders or federated learning where the complete graph is unavailable~\cite{socialrec,wang2021privileged,zhang2021subgraph}, promoting concentration on targeted community to bring up economic benefits~\cite{wu2019personalizing}. 

Achieving this target is non-trivial for that we need to first find a principled and fundamental way to encapsulate the geometric knowledge extracted by GNN model, which requires in-depth investigation on the role of graph topology throughout the progressive process of message passing.
Therefore, we take a thermodynamic view borrowed from physics and propose a new methodology built upon recent advances revealing the connection between heat diffusion and architectures of GNNs~\cite{chamberlain2021grand,wang2021dissecting,chamberlain2021beltrami}. Specifically, we interpret feature propagation as heat flows on the underlying Riemmanian manifold, whose characteristics (that are dependent on graph topology and the GNN model) pave the way for a principled representation of the latent geometric knowledge.

 \subsection{Our Contributions}
\textbf{New theoretical perspective for analyzing latent graph geometry.}\; 
On top of the connection between heat equation and GNNs, we step further to inspect the implication of heat kernel for GNNs, and propose a novel notion of \emph{Neural Heat Kernel} (NHK) with rigorous proof of its existence. Heat kernel intrinsically defines the unique solution to the heat equation and can be a fundamental characterization for the geometric property of the underlying manifold~\cite{grigoryan2009heat,grigor1999estimates}. Likewise, NHK uncovers geometric property of the latent graph manifold for GNNs, and governs how information flows between pairs of instances, which lends us a mathematical tool to encapsulate geometric knowledge extracted from GNN model and enables geometric knowledge transfer. This result alone might also be useful in broader contexts for understanding GNNs.

\textbf{Flexible distillation framework with versatile instantiations.}\; Based on the above insights, we treat NHK matrices as representation of the latent geometric knowledge, upon which we build a flexible and principled distillation framework dubbed as \emph{Geometric Knowledge Distillation} (GKD), which aims at encoding and transferring geometric knowledge by aligning latent manifolds behind GNN models as illustrated in Fig.~\ref{fig_motiv}. We also develop non-parametirc and parametric versions of GKD, in terms of different ways to approximate NHK computation. Specifically, the former derives explicit NHKs via assumptions on latent space, and the later learns NHK in a data-driven manner.

\textbf{Applications for geometric knowledge transfer and conventional KD purposes.}\; We verify the efficacy of GKD in terms of different geometric knowledge types (i.e., edge-aware and node-aware ones), and further show its effectiveness for conventional KD purposes (e.g., model compression, self distillation, online distillation) for broader applicability. We highlight that our methods consistently exceed teacher model and rival with the oracle model that gives the performance upper bound in principle.

\begin{figure}[tb!]
	\centering
	\floatbox[{\capbeside\thisfloatsetup{capbesideposition={right,center},capbesidewidth=0.42\textwidth}}]{figure}[\FBwidth]
	{\caption{Feature propagation on the underlying manifold $\mathcal M$. (a) {Teacher}: aware of the complete graph topology, and faithfully explore geometric knowledge about the underlying manifold. (b) {Student before GKD}: only aware of partial graph topology, and estimate biased geometry property. (c) {Student after GKD}: able to propagate features on the same space as teacher by alignment of NHKs.}\label{fig_motiv}}
	{\vspace{-10pt}\includegraphics[width=0.5\textwidth,angle=0]{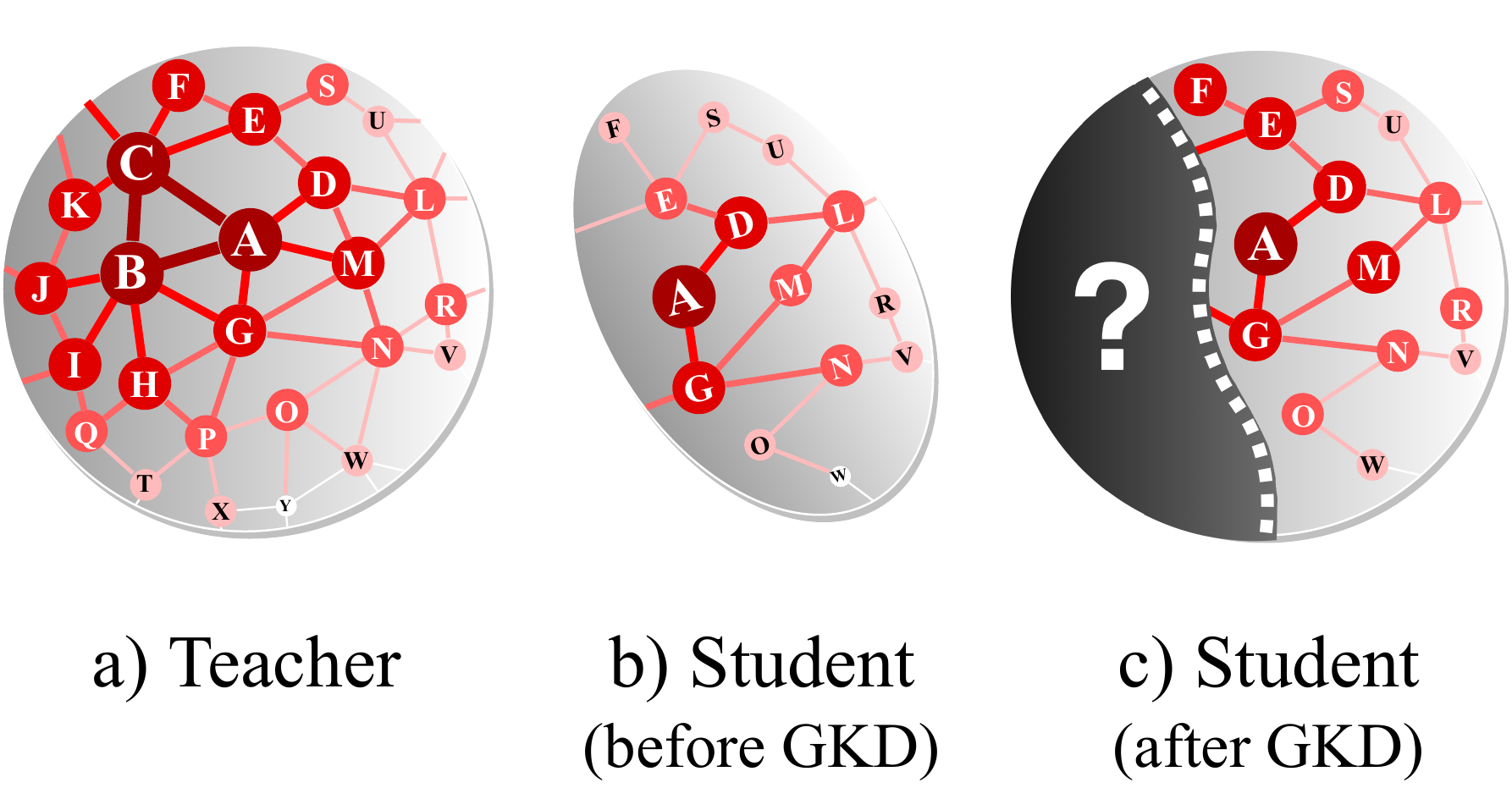}}
	\vspace{-5pt}
\end{figure} 

\subsection{Links to Related Works} \label{related}
\paragraph{Geometric Deep Learning.} The study of geometric deep learning~\cite{bronstein2017geometric,monti2017geometric} provides fundamental principles and methodology to generalize deep learning methods to non-Euclidean domains (e.g., graphs and manifolds). 
From this perspective, architectures for off-the-shelf GNNs~\cite{kipf2016semi,velivckovic2018graph,wu2019simplifying,hamilton2017inductive} and graph Transformers~\cite{graphbert,nodeformer} have naturally incorporated the geometric prior knowledge for graphs such as \emph{permutation invariance}. Despite their remarkable success, they highly rely on the graph topology. This work extends the idea of geometric deep learning by treating the global graph topology as a special type of prior knowledge, and attempts to encode it into GNNs themselves, such that the trained model would leverage information from the global graph topology even without explicitly taking it as input.

\paragraph{Graph-Based Knowledge Distillation.} Knowledge distillation (KD)~\cite{hinton2015distilling,ba2014deep} uses the outputs of a teacher model as alternative supervised signals to teach a student model, with various new paradigms including feature-based~\cite{romero2014fitnets,zagoruyko2016paying} and relation-based~\cite{yim2017gift,you2017learning} ones. 
While some prior arts~\cite{chen2020self,yang2021extract,wang2021privileged,zhang2021graph,yang2020distilling,yan2020tinygnn} attempted to combine KD and GNNs, i.e., graph-based KD, they are nearly straight-forward adaptations of KD without in-depth investigation on the role of graph topology, also restricted by a specific choice of GNN architecture or application scenarios. 
In contrast, we first formalize the problem of geometric knowledge transfer, theoretically answer the question ``how to represent graph geometric knowledge and encode it into GNN models", and propose geometric distillation approach based on the theoretical results. More discussions on related works are deferred to Appendix~\ref{app_comparison}.

\section{Preliminaries} \label{preliminary}
We commence with a brief detour to heat equation on Riemannian manifolds, and its connection with modern GNN architectures. Moreover, we bring forth the notion of \emph{heat kernel} to motivate this work.

\paragraph{Heat Equation on Manifolds.} We are interested in heat equation defined on a smooth $k$-dimensional Riemannian manifold $\mathcal M$. 
Suppose the manifold is associated with a scalar- or vector-valued function $x(u,t): \mathcal M \times[0, \infty) \rightarrow \mathbb{R}^d$, quantifying a specific type of signals such as \emph{heat} at a point $u\in \mathcal M$ and time $t$. 
Fourier's law of heat conductivity describes the flow of heat with respect to time and space, via a partial differential equation (PDE) called \emph{heat equation}~\cite{cannon1984one}, i.e.,
\begin{equation} \label{heq}
    \frac{\partial x(u, t)}{\partial t} = - c\;\Delta x(u, t),
\end{equation}
where $c>0$ is the \emph{thermal conductivity} coefficient, and $\Delta$ is the natural \emph{Laplace–Beltrami operator} associated with $\mathcal M$. Rewriting $\Delta$ as the functional composition of the \emph{divergence operator} $\nabla^*$ and \emph{gradient operator} $\nabla$, i.e., $\Delta = \nabla^{*} \circ \nabla$, we can interpret the heat equation as: the variation of temperature within an infinitesimal time interval at a point is equivalent to the divergence between its own temperature and the average temperature on an infinitesimal sphere around it.

\paragraph{Implications on Graphs.} A spatial discretisation of a continuous manifold yields a graph $\mathcal G = \{\mathcal V, \mathcal E\}$, whose nodes can be thought of as embedded on the base manifold. In fact, the heat equation along with variants thereof (e.g., Schrödinger equation) have found widespread use in modeling graph dynamics~\cite{chung1997spectral,keller2010unbounded,medvedev2014nonlinear}. More importantly, it has been recently revealed to be intimately related with the architectures of modern GNNs~\cite{wang2021dissecting,chamberlain2021grand,chamberlain2021beltrami}: suppose $\mathbf{X}(0)=\{x(u,0)\}_{u\in \mathcal V}\in \mathbb R^{n\times d}$ denotes the initial condition for Eqn.~\eqref{heq} determined by input node features, then solving the heat equation under certain definitions of $\nabla^*$ and $\nabla$ (i.e., definition of $\Delta$) amounts to different architectures of GNNs. For instance:

\textbf{Example 1.}~\cite{wang2021dissecting}\;
Define the discretised counterpart of $\Delta$ as the graph Laplacian
matrix $\mathbf{L}=\widetilde{\mathbf{D}}^{-\frac{1}{2}}(\widetilde{\mathbf{D}}-\widetilde{\mathbf{A}}) \widetilde{\mathbf{D}}^{-\frac{1}{2}}$.
Numerically solving Eqn.~\eqref{heq} using the forward Euler method with step size $\tau=1$ yields the formulation of Simple Graph Convolution (SGC)~\cite{wu2019simplifying}, where $\Theta$ denote learnable transformation matrix
\begin{equation} \label{sgc}
    \hat{\mathbf{X}}(t)=\left(\tilde{\mathbf{D}}^{-\frac{1}{2}} \tilde{\mathbf{A}} \tilde{\mathbf{D}}^{-\frac{1}{2}}\right)^{t} \mathbf{X}(0),\quad \hat{\mathbf{Y}}=\operatorname{softmax}\left(\hat{\mathbf{X}}(t) \boldsymbol{\Theta}\right).
\end{equation}
\textbf{Example 2.}~\cite{chamberlain2021grand}\;
Define the gradient operator $\nabla_{ij}$ as the difference of source and target node features, the divergence operator $\nabla_{i}^*$ as the sum of features of all edges for the node. Numerically solving Eqn.~\eqref{heq} using the explicit Euler scheme with step size $\tau$ yields the following recursive formulation
\begin{equation} \label{grand}
\hat{\mathbf{X}}(t+\tau)=\tau\left(\mathbf{G}-I\right)\hat{\mathbf{X}}(t) + \hat{\mathbf{X}}(t)
\end{equation}
where $\mathbf G$ is a diffusivity coefficient matrix in place of $c$.

Moreover, stacking a non-linear transformation layer after each step yields the formulation of Graph Convolution Networks (GCN)~\cite{kipf2016semi} for Eqn.~\eqref{sgc}, Graph Attention Networks (GAT)~\cite{velivckovic2018graph} with residual connection for Eqn.~\eqref{grand}, and even more GNN architectures by virtue of the flexibility of interpretion for heat equation on graphs.

\paragraph{Heat Kernels.}
Intriguingly, it turns out that the initial value problem of heat equation on any manifold $\mathcal M$ has a smallest positive fundamental solution depending on the Laplace operator $\Delta$, known as the \textit{heat kernel}~\cite{berline2003heat}. It is denoted as a kernel function $\kappa(x, y, t)$, such that
\begin{equation} \label{eqn_kernel}
    x(u_i,t)=e^{-t \Delta} x(u_i,0)=\int_{\mathcal M} \kappa(u_i, u_j, t) x(u_j,0) \mathrm{d} \mu(u_j),
\end{equation}
where $\mu$ is a non-negative measure associated with $\mathcal M$. In physics, the heat kernel $\kappa(x, y, t)$ can be interpreted as a transition density that describes the asymptotic behavior of a natural Brownian motion on the manifold.
Its formulation thus can be treated as \emph{a unique reflection or representation of the geometry of the underlying manifold}. For example, if the manifold is a $k$-dimensional \emph{Euclidean Space} $\mathbb R^k$ or a \emph{Hyperbolic Space} $\mathbb H^k$, the explicit formula of heat kernel is respectively given by,
{\small
\begin{equation} \label{eqn_gaus}
    \kappa(u_i, u_j, t)=\frac{1}{(4 \pi t)^{k / 2}} \exp \left(-\frac{\rho^2}{4 t}\right)\mbox{ and } \kappa(u_i, u_j, t)=\frac{(-1)^{m}}{2^{m} \pi^{m}} \frac{1}{(4 \pi t)^{\frac{1}{2}}}\left(\frac{1}{\sinh \rho} \frac{\partial}{\partial \rho}\right)^{m} e^{-m^{2} t-\frac{\rho^{2}}{4 t}},
\end{equation}}
\hspace{-5pt} where $\rho = d(u_i, u_j)$ denote geodesic distance. 
Heat kernel has also been adopted for graph-related applications such as community detection~\cite{kloster2014heat}, graph clustering~\cite{xiao2010geometric}.

\section{Extending Heat Kernel to GNNs}
The starting point of this work is the development of \emph{neural heat kernel}, built upon the previously-mentioned connection of GNNs and heat equation. As will be discussed later, this novel notion lends us a thermodynamic perspective to the intrinsic geometric property of the latent graph manifold embodied in GNNs, and hence paves the way for distilling geometric knowledge.

\subsection{Neural Heat Kernel}
Consider the graph signal $\mathbf{X}(t)$ at time $t$ and node features $\mathbf{H}^{(l)}$ at layer $l$ as interchangeable notions. 
{Consequently, feature propagation using one layer of GNN amounts to heat diffusion on the base manifold $\mathcal M$ within a certain time interval $\tau$, leading to the equivalences of $\mathbf{X}(t+\tau)$ and $\mathbf{H}^{(l+1)}$:
\begin{equation} \label{eqn_equi}
\begin{split}
       \mathbf{H}^{(l+1)} = f_\theta(\mathbf{H}^{(l)}, \mathcal G), \;\mathbf{X}(t+\tau) = e^{-\tau \Delta(f_\theta,\mathcal G)} \mathbf{X}(t),
\end{split}
\end{equation}
where $f_\theta$ denotes an arbitrary GNN model with parameter $\theta$, and $\Delta(f_\theta,\mathcal G)$ denotes a generalization of Laplace-Beltrami operator defined over the base manifold $\mathcal M$ associated with graph $\mathcal G$ and the arbitrary backbone GNN model $f_\theta$.}

\textbf{Remark.\;} The equivalence of two equations in Eqn.~\ref{eqn_equi} is based on the recently established connection between heat equation and GNNs~\cite{chamberlain2021grand,chamberlain2021beltrami,wang2021dissecting,eliasof2021pde,di2022graph}, which reveal that the formulation of a GNN layer could be thought of as discretisations (that correspond to the left equation in Eqn.~\ref{eqn_equi}) of the continuous diffusion process (that correspond to the right equation in Eqn.~\ref{eqn_equi}) described by the heat equation. Furthermore, different definitions of Laplace-Beltrami operator $\Delta$ and schemes for solving Eqn.~\ref{heq} could yield different GNNs (e.g., SGC~\cite{wu2019simplifying}, GAT~\cite{velivckovic2018graph}, GRAND~\cite{chamberlain2021grand}). While it is unclear whether there exists such a definition of $\Delta$ for every GNN architecture, we write the operator as $\Delta(f_\theta, \mathcal G)$ to associate it with model $f_\theta$, and then use the analogy between GNN and heat equation as an analytical tool, in a similar manner with~\cite{bodnar2022neural,topping2021understanding,wang2021dissecting,thorpe2021grand,chamberlain2021grand,chamberlain2021beltrami}, for studying the geometry property of GNNs. See more detailed justifications in Appendix~\ref{app_justify}.

In light of this connection, we consider a natural generalization of heat kernel for GNNs, termed as \emph{neural heat kernel (NHK)} to highlight its difference with heat kernel in the thermodynamic context.
In particular, a \emph{single-layer} NHK is defined as a positive definite symmetric kernel function denoted as $\kappa^{(l)}_\theta(v_i,v_j)$, where the sub-script $\theta$ implies that it is associated with the architecture and parameters of the backbone GNN, and the super-script $(l)$ implies that it is specific to each layer, analogous to the role of continuous time $t$ in Eqn.~\eqref{eqn_kernel}.

\begin{theorem} \textbf{(Existence of Single-Layer NHK)}
{Suppose two expressions in Eqn.~\eqref{eqn_equi} are equivalent (see Appendix~\ref{app_justify} for more discussions),} then for any graph $\mathcal G$ and GNN model $f_\theta$, there exist a unique single-layer NHK function $\kappa^{(l)}_\theta(\cdot)$ such that for any node $v_i\in \mathcal V$ and $l > 0$, 
\begin{equation} \label{eqn_nhkgnn}
\begin{split}
    \mathbf h_i^{(l)} =\sum_{v_j\in \mathcal V} \kappa^{(l)}_\theta(v_i,v_j) \cdot \mathbf h_j^{(l-1)} \mu(v_j)
\end{split}
\end{equation}
where $\mathbf h_i^{(l)}\in \mathbb R^d$ denotes the feature of node $v_i$ at $l$-th layer, and $\mu$ is a measure over vertices that could be specified as the inverse of node degree $1/d_i$.
\end{theorem}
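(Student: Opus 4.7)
My strategy is to extract the kernel directly from the heat-semigroup operator supplied by Eqn.~\eqref{eqn_equi}: since the vertex set $\mathcal V$ is finite, the existence question reduces to a statement about matrix representations of a bounded linear operator, and the symmetry and positive-definiteness are inherited from the semigroup structure.

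First I would invoke the assumed equivalence in Eqn.~\eqref{eqn_equi}: one layer of propagation acts on the signal as the heat semigroup $T^{(l)} := e^{-\tau\Delta(f_\theta,\mathcal G)}$, so $\mathbf h^{(l)} = T^{(l)} \mathbf h^{(l-1)}$ channel-wise on each of the $d$ feature coordinates. Because $T^{(l)}$ is a linear map on the finite-dimensional space of functions on $\mathcal V$, there exists a unique matrix $M^{(l)} \in \mathbb R^{|\mathcal V|\times|\mathcal V|}$ such that $(T^{(l)}\mathbf h)_i = \sum_{v_j\in \mathcal V} M^{(l)}_{ij}\,\mathbf h_j$. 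Setting
\[
\kappa^{(l)}_\theta(v_i,v_j) := \frac{M^{(l)}_{ij}}{\mu(v_j)}
\]
immediately yields $\mathbf h^{(l)}_i = \sum_{v_j} \kappa^{(l)}_\theta(v_i,v_j)\,\mathbf h^{(l-1)}_j\,\mu(v_j)$, which is exactly Eqn.~\eqref{eqn_nhkgnn}. Uniqueness follows by testing Eqn.~\eqref{eqn_nhkgnn} against the indicator inputs $\mathbf h^{(l-1)} = \mathbf e_j$: any candidate kernel $\tilde\kappa$ must satisfy $\tilde\kappa(v_i,v_j)\mu(v_j) = M^{(l)}_{ij}$, and since $\mu(v_j)>0$ the kernel is pinned down uniquely.

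For symmetry and positive definiteness I would turn to the spectral theory of the heat semigroup on the weighted Hilbert space $L^2(\mathcal V,\mu)$. The generalized operator $\Delta(f_\theta,\mathcal G)$ is self-adjoint and positive semi-definite in this inner product for the natural choices exhibited in Section~\ref{preliminary} (e.g.\ the normalized Laplacian of Example~1 with $\mu(v_j)=1/d_j$ giving detailed balance, and the symmetric diffusivity of Example~2). Consequently $T^{(l)}=e^{-\tau\Delta}$ is self-adjoint and positive definite on the same space; unpacking the self-adjointness in coordinates gives the weighted symmetry $M^{(l)}_{ij}\mu(v_i)=M^{(l)}_{ji}\mu(v_j)$, which is exactly $\kappa^{(l)}_\theta(v_i,v_j) = \kappa^{(l)}_\theta(v_j,v_i)$, while positive semi-definiteness of $M^{(l)}$ relative to the weighted inner product translates directly into positive definiteness of $\kappa^{(l)}_\theta$ as a kernel on $\mathcal V$.

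The main obstacle I anticipate is not the finite-dimensional bookkeeping but the hypothesis that $\Delta(f_\theta,\mathcal G)$ really is self-adjoint on $L^2(\mathcal V,\mu)$ for an arbitrary GNN, since architectures with asymmetric propagation (row-stochastic normalization, directed attention) have no obvious symmetric generator. I would handle this either by restricting to the symmetric family covered by the discussion around Eqn.~\eqref{eqn_equi} (which already accommodates SGC, GRAND, and their symmetrizable variants), or by letting the vertex measure $\mu$ depend on $\theta$ so that a detailed-balance condition restores self-adjointness; either route preserves the structural claim of the theorem while making the reduction to the finite-dimensional functional calculus rigorous.
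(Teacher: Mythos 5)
Your proposal is correct, but it takes a genuinely different route from the paper's. You treat $e^{-\tau\Delta(f_\theta,\mathcal G)}$ as a linear operator on the finite-dimensional space of functions on $\mathcal V$, read off its unique matrix representation $M^{(l)}$, and divide by $\mu(v_j)>0$ to obtain the kernel, with uniqueness pinned down by testing against indicator inputs. The paper instead works on the continuous base manifold $\mathcal M$ of which $\mathcal V$ is a discretisation: it first proves a heat-semigroup regularization estimate $\sup_K|\mathbf P_t x|\le C(1+t^{-\sigma})\|x\|_{L^2}$ via a local Sobolev inequality and the spectral calculus bound $\|\Delta_\mu^k \mathbf P_t x\|_{L^2}\le (k/t)^k e^{-k}\|x\|_{L^2}$, which makes pointwise evaluation of $\mathbf P_\tau g$ a bounded linear functional on $L^2(\mathcal M)$, and then invokes the Riesz representation theorem to produce $p_{\tau,v_i}\in L^2(\mathcal M)$. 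That analytic machinery is exactly what is needed when the underlying space is infinite-dimensional (point evaluation is not automatically continuous on $L^2$), and it buys a kernel defined on all of $\mathcal M\times\mathcal M$; your argument buys economy — the Sobolev and spectral estimates become vacuous once the operator is a matrix, and your proof matches the literal statement of Eqn.~\eqref{eqn_nhkgnn}, which only ever sums over the finite vertex set. Your closing caveat about self-adjointness is also well taken: the paper secures symmetry only implicitly by working with the self-adjoint Dirichlet Laplacian, and neither proof establishes positive definiteness for architectures with genuinely asymmetric propagation, so flagging the detailed-balance fix is a legitimate strengthening rather than a gap.
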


To push further, we can generalize NHK across multiple layers of GNN, termed as a \textit{cross-layer NHK} $\kappa_\theta(v_i,v_j,l \mapsto l+k)$ (e.g., from $l$-th layer to $(l+k)$-th layer of GNN). Its existence could be induced recursively by the \emph{semi-group identity property} of NHK concerning consecutive GNN layers.

\begin{theorem}~\label{thm_semi} \textbf{(Semigroup Identity Property of NHK)}
The NHK satisfies the semigroup identity property: $\forall v_i, v_j \in \mathcal V$ and $l>0$, there exists a cross-layer NHK across two consecutive layers
\begin{equation}
\begin{split}
        \kappa_\theta(v_i, v_j, l\mapsto l+2)=
        \sum_{v_k\in \mathcal V} \kappa_\theta^{(l+1)}(v_i, v_k) \kappa_\theta^{(l+2)}(v_k, v_j) d \mu(v_k)
\end{split}
\end{equation}
\end{theorem}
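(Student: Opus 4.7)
The plan is to derive the semigroup identity as a direct two-step consequence of Theorem 1, closed by a uniqueness argument of the same flavor as in Theorem 1's proof.

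First, I would invoke Theorem 1 on the two consecutive single-layer transitions $l \mapsto l+1$ and $l+1 \mapsto l+2$:
\[
\mathbf{h}_k^{(l+1)} = \sum_{v_j \in \mathcal V} \kappa_\theta^{(l+1)}(v_k, v_j)\, \mathbf{h}_j^{(l)}\, \mu(v_j),
\qquad
\mathbf{h}_i^{(l+2)} = \sum_{v_k \in \mathcal V} \kappa_\theta^{(l+2)}(v_i, v_k)\, \mathbf{h}_k^{(l+1)}\, \mu(v_k).
\]
Substituting the former into the latter and interchanging the two finite sums yields
\[
\mathbf{h}_i^{(l+2)} = \sum_{v_j \in \mathcal V} \Bigl[\sum_{v_k \in \mathcal V} \kappa_\theta^{(l+2)}(v_i, v_k)\, \kappa_\theta^{(l+1)}(v_k, v_j)\, \mu(v_k)\Bigr]\, \mathbf{h}_j^{(l)}\, \mu(v_j),
\]
so the bracketed quantity is an explicit candidate for a two-layer kernel (the indexing of $\kappa_\theta^{(l+1)}$ vs.\ $\kappa_\theta^{(l+2)}$ in the bracket is just a bookkeeping choice matching Theorem 1's convention).

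Second, I would argue that a cross-layer NHK $\kappa_\theta(v_i, v_j, l \mapsto l+2)$ genuinely exists and represents the two-layer propagation in the analogous integral form $\mathbf{h}_i^{(l+2)} = \sum_{v_j} \kappa_\theta(v_i, v_j, l \mapsto l+2)\, \mathbf{h}_j^{(l)}\, \mu(v_j)$. This is obtained by reusing the heat-semigroup reasoning that underlies Theorem 1, applied now to the two-layer propagation operator $e^{-2\tau\,\Delta(f_\theta,\mathcal G)}$, which is itself a heat-semigroup element and therefore admits its own integral kernel.

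Third, I would equate the two representations of $\mathbf{h}_i^{(l+2)}$ and conclude. Since the identity holds for every choice of initial feature matrix $\{\mathbf{h}_j^{(l)}\}_{v_j \in \mathcal V}$, specializing those features to standard-basis indicators (supported at a single vertex in a single channel) pins the kernel values down pointwise, forcing the bracketed composition to coincide with $\kappa_\theta(v_i, v_j, l \mapsto l+2)$ for every pair $(v_i, v_j)$. This is exactly the claimed semigroup identity.

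The main obstacle, I anticipate, is cleanly establishing existence and uniqueness of the cross-layer NHK, since Theorem 1 addresses this only at the single-layer level. I would dispatch both at once by reapplying the existence argument of Theorem 1 with the single-layer operator replaced by the two-layer composition, so that the integral kernel of the combined map falls out as a new NHK. A secondary bookkeeping point is verifying that symmetry and positive-definiteness are preserved by the kernel composition, which follows from the standard fact that a convolution of two symmetric positive-definite kernels against a positive measure is again symmetric positive-definite.
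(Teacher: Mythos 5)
Your proposal is correct in substance, but it reaches the identity by a more elementary route than the paper. You compose the two discrete single-layer representations directly, interchange the finite sums to exhibit a candidate two-layer kernel, and then close with existence (re-running Theorem~1's Riesz argument on the composed operator $e^{-(\tau^{(l+1)}+\tau^{(l+2)})\Delta}$) plus uniqueness (testing against indicator inputs). The paper instead works at the continuous level: it first proves an auxiliary lemma that the inner product $\left(p_{s,x},\,p_{t-s,y}\right)$ is independent of $s\in(0,t]$, using the operator identity $\mathbf P_{t+s}=\mathbf P_s\mathbf P_t$ together with the self-adjointness of $\mathbf P_t$, deduces $p_{t+s}(v_i,v_k)=\int p_t(v_i,v_j)\,p_s(v_j,v_k)\,d\mu(v_j)$ for the continuous heat kernel, and only then discretizes by setting $s=\min\{\tau^{(l+1)},\tau^{(l+2)}\}$ and $t=\max\{\tau^{(l+1)},\tau^{(l+2)}\}$. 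What your approach buys is directness: the discrete statement falls out of Theorem~1 with almost no additional machinery. What the paper's approach buys is an explicit handling of symmetry, which matters for a point you wave off as ``bookkeeping.''

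That point deserves one more sentence. Your substitution produces the kernel $\sum_{v_k}\kappa_\theta^{(l+2)}(v_i,v_k)\,\kappa_\theta^{(l+1)}(v_k,v_j)\,\mu(v_k)$, whereas the theorem asserts $\sum_{v_k}\kappa_\theta^{(l+1)}(v_i,v_k)\,\kappa_\theta^{(l+2)}(v_k,v_j)\,\mu(v_k)$; these are transposes of one another, and for two symmetric kernels the $\mu$-weighted composition is symmetric only when the associated operators commute. Here they do commute, since both layers are realized as $e^{-\tau\Delta(f_\theta,\mathcal G)}$ for the same $\Delta$ -- this is also exactly what rescues your claimed ``standard fact'' that composing symmetric positive-definite kernels against a positive measure stays symmetric positive-definite, which is false for non-commuting factors. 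The paper's Lemma on the $s$-independence of $\left(p_{s,x},p_{t-s,y}\right)$ is precisely the device that supplies this commutativity/symmetry; if you want your shorter argument to be airtight, state that the reordering is licensed by the common generator $\Delta$, not by bookkeeping.
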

This theorem indicates that stacks of multiple GNN layers also constitute a valid kernel, i.e.,
\begin{equation}
\begin{split}
    \mathbf h_i^{(l+k)}=\sum_{v_j\in \mathcal V} \kappa_\theta(v_i,v_j,l \mapsto l+k) \cdot \mathbf h_j^{(l)} \mu(v_j).
\end{split}
\end{equation}
Analogous to heat kernel as an unique characterization of the underlying space, NHK characterizes the geometric property of the latent graph manifold for GNNs. Additionally, NHK is dependent on GNN models through the definition of the associated Laplace-Beltrami operator $\Delta(f_\theta, \mathcal G)$, inheriting the expressiveness of neural networks and varying through the course of training. Intuitively, NHK can be thought of as a model-driven encoding for topological information, encapsulating the geometric knowledge learned by GNNs into a tractable functional form.  

\subsection{Application in Geometric Distillation} 
Consider the problem of distilling geometric knowledge, which involves an intelligent teacher model $f_{\theta^*}$, which is exposed to and pre-trained over the (relatively) \emph{complete graph} $\tilde{\mathcal G} = (\tilde{\mathcal V}, \tilde{\mathcal E})$, and a student model $f_{\theta}$ that is exposed to the partial graph $\mathcal G = (\mathcal V, \mathcal E)$, where $\mathcal V \subseteq \tilde{\mathcal V}$ and $\mathcal E \subseteq \{\mathcal{V} \times \mathcal{V}\} \cap \tilde{\mathcal E}$. Our target is to train a student model (with the help of teacher model) that operates on $\mathcal G$ to be as competitive as models operating on $\tilde{\mathcal G}$ during inference.
Since $\mathcal G$ is a sub-graph of $\tilde{\mathcal G}$, they should lie in the \emph{same space} (i.e., latent manifold) governed by the underlying mechanism of data generation, and hence we expect student and teacher models to capture the \emph{same geometric property} of this shared space. This leads to the principle of \emph{Geometric Knowledge Distillation} (GKD): transfer the geometric knowledge of the intelligent teacher to the student such that the student can propagate features as if it is aware of the complete graph topology (see the example in Fig.~\ref{fig_motiv}).

To this end, we resort to \textit{NHK matrices} on the teacher (resp. student) model over the complete (resp. partial) graph as instantiations of their geometric knowledge, denoted as
\begin{equation*}
\begin{split}
    \mbox{(Teacher)} \quad&\mathbf K_{\theta^*}(\tilde{\mathcal G}, l\mapsto l+k) = \{\kappa_{\theta^*}(v_i,v_j,l \mapsto l+k)\}_{|\tilde{\mathcal V}|\times|\tilde{\mathcal V}|},\\
    \mbox{(Student)} \quad&\mathbf K_\theta({\mathcal G}, l\mapsto l+k) = \{\kappa_{\theta}(v_i,v_j,l \mapsto l+k)\}_{|{\mathcal V}|\times|{\mathcal V}|},\\
\end{split}
\end{equation*}
written compactly as $\mathbf K^{(l+1)}({\mathcal G})$ when $k=1$. The NHK matrix is a positive semi-definite symmetric matrix, and alike $\kappa$, is dependent on the GNN model $f_\theta$ and graph $\mathcal G$. Denote $\mathbf K_{\theta^*,\mathcal V}^{(l)}(\tilde{\mathcal G}) \in \mathbb R^{|\mathcal V|\times|\mathcal V|}$ as the sub-matrix of $\mathbf K_{\theta^*}^{(l)}(\tilde{\mathcal G})$ with row and column indices in $\mathcal V$. The distillation loss for GKD is
\begin{equation}\label{eqn-dis}
\begin{split}
        \mathcal L_{dis}(\mathbf K_{\theta^*,\mathcal V}, \mathbf K_{\theta},l\mapsto l+ k) 
        = \mathrm{d}(\mathbf K_{\theta^*,\mathcal V}(\tilde{\mathcal G},l\mapsto l+k), \mathbf K_{\theta}({\mathcal G},l\mapsto l+k)),
\end{split}
\end{equation}
where $\mathrm{d}(\cdot,\cdot)$ is a similarity measure, for which we choose Frobenius distance as implementation, i.e., 
\begin{equation} \label{eqn_simmeaure}
    \mathrm{d}(\mathbf K_{\theta^*,\mathcal V}, \mathbf K_{\theta}) = \|(\mathbf K_{\theta^*,\mathcal V}-\mathbf K_{\theta})\odot \mathbf W\|^2_{\mathrm{F}},\quad \mathbf W_{v_i, v_j}=\left\{\begin{array}{rcl}
1 & \mbox {if} & (v_i, v_j)\in \mathcal E \\
\delta & \mbox {if} & (v_i, v_j)\notin \mathcal E.
\end{array}\right.
\end{equation}
where $\mathbf W \in \mathbb R^{|\mathcal V|\times|\mathcal V|}$ is a weighting matrix to trade-off distillation loss with respect to different node pairs depending on their connectivity. For $k = 1$, the loss can be re-written as $\mathcal L^{(l+1)}_{dis}(\mathbf K_{\theta^*,\mathcal V}, \mathbf K_{\theta})$. Note that one can also specify different $k$ for teacher and student models in Eqn.~\eqref{eqn-dis} in case when the teacher model is deeper.

 \section{Instantiations for Geometric Knowledge Distillation} 
Unfortunately, deriving explicit formulas for NHKs is prohibitively challenging due to introduction of non-linearity. To circumvent it, we propose two types of instantiations for GKD, i.e., non-parametric and parametric. The former considers explicit NHKs by making assumptions on the underlying space, and the latter learns NHK in a data-driven manner. 

 \subsection{Non-Parametric Geometric Distillation} 
\textbf{Deterministic Kernel.}\; One instantiation of NHK is a \emph{Gauss-Weierstrass kernel} in the form of Eqn.~\eqref{eqn_gaus}, by assuming the underlying space is a Euclidean space. Since the distillation loss in Eqn.~\eqref{eqn-dis} is a homogeneous function, we can remove its scaling factor and define NHK as
\begin{equation}\label{eqn-gaussian-kernel}
    \mbox{(Gauss-Weierstrass NHK)} \quad\kappa_\theta(v_i, v_j, l\mapsto l+k)\triangleq \exp \left(-\frac{\|\mathbf h_i^{(l)}-\mathbf h_j^{(l)}\|_2^2}{4T}\right),
\end{equation}
where $T$ denotes the estimation of the accumulated time interval. Alternatively, we can use \emph{Sigmoid kernel} and define non-parametric NHK as:
\begin{equation}\label{eqn-sigmoid-kernel}
    \mbox{(Sigmoid NHK)} \quad\kappa_\theta(v_i, v_j, l\mapsto l+k)\triangleq \mathrm{tanh} \left(a\;\langle\mathbf h_i^{(l)},\mathbf h_j^{(l)}\rangle+b\right),
\end{equation}
where $a, b$ are positive constants depending on $l$ and $k$. It is a natural and intuitive choice as similarity measurement and empirically found as-well effective, albeit does not correspond to any named manifold to our knowledge.

\textbf{Randomized Kernel.}\; We can also define \emph{Randomized kernel} based on the following theorem.
\begin{theorem} \textbf{(Expansion of NHK)}
Let $\left\{\varphi_{k'}\right\}_{k'=0}^{\infty}$ be orthonormal basis of eigenfunctions of $-\Delta(f_\theta, \mathcal G)$ with eigenvalues $0<\lambda_{0}\leq \lambda_{1} \leq \lambda_{2} \leq \ldots\;$, NHK allows the expansion:
\begin{equation} \label{eqn_kerdecomp}
\kappa_{\theta}(v_i, v_j, l\mapsto l+k)=\sum_{k'=0}^{\infty} e^{-\lambda_{k'} T} \varphi_{k'}(v_i)^\top \varphi_{k'}(v_j).
\end{equation}
\end{theorem}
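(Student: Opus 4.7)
The plan is to apply the spectral theorem to the generalized Laplacian $\Delta(f_\theta,\mathcal G)$ and identify the resulting bilinear expansion as the NHK via uniqueness. First I would combine the equivalence in Eqn.~\eqref{eqn_equi} with the semigroup property from Theorem~\ref{thm_semi} to write, for $k$ consecutive layers accumulating time $T$, the closed-form evolution $\mathbf{H}^{(l+k)} = e^{-T\,\Delta(f_\theta,\mathcal G)}\mathbf{H}^{(l)}$. The task then reduces to computing the integral kernel of the heat semigroup $e^{-T\Delta}$ on $L^2(\mathcal V,\mu)$ and matching it against the representation in Eqn.~\eqref{eqn_nhkgnn}.

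Next, invoking the theorem's hypothesis that $\{\varphi_{k'}\}$ is a complete orthonormal system of eigenfunctions of the self-adjoint operator with eigenvalues $\lambda_{k'}\ge 0$, I would expand each coordinate of $\mathbf h^{(l)}$ in this basis, $\mathbf h^{(l)} = \sum_{k'} \alpha_{k'}\varphi_{k'}$ with $\alpha_{k'} = \sum_{v_j\in\mathcal V} \varphi_{k'}(v_j)\,\mathbf h_j^{(l)}\,\mu(v_j)$. Functional calculus gives $e^{-T\Delta}\varphi_{k'} = e^{-\lambda_{k'} T}\varphi_{k'}$, so evaluating at a vertex $v_i$ yields
\begin{equation*}
\mathbf h_i^{(l+k)} \;=\; \sum_{k'=0}^{\infty} e^{-\lambda_{k'} T}\,\varphi_{k'}(v_i)\sum_{v_j\in\mathcal V}\varphi_{k'}(v_j)\,\mathbf h_j^{(l)}\,\mu(v_j).
\end{equation*}
Swapping the two summations (justified below) exposes a bilinear kernel $\tilde\kappa(v_i,v_j) = \sum_{k'} e^{-\lambda_{k'} T}\,\varphi_{k'}(v_i)^\top\varphi_{k'}(v_j)$ satisfying $\mathbf h_i^{(l+k)} = \sum_{v_j}\tilde\kappa(v_i,v_j)\,\mathbf h_j^{(l)}\,\mu(v_j)$. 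Since Theorem~1 together with Theorem~\ref{thm_semi} pins down the cross-layer NHK uniquely, I conclude $\kappa_\theta(v_i,v_j,l\mapsto l+k) = \tilde\kappa(v_i,v_j)$, which is exactly Eqn.~\eqref{eqn_kerdecomp}.

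The step I expect to be the main obstacle is securing the spectral-theorem prerequisites for the abstract operator $\Delta(f_\theta,\mathcal G)$, which is specified only implicitly through the GNN architecture: self-adjointness with respect to the measure $\mu$, discreteness of the spectrum, and completeness of the eigenbasis. For finite graphs this collapses to a matrix statement on the finite-dimensional space $L^2(\mathcal V,\mu)$ — the symmetry of the single-layer NHK from Theorem~1 already forces the generator to be symmetric, so the spectral decomposition is automatic and the series becomes a finite sum indexed by $0\le k'\le |\mathcal V|-1$. In this regime the interchange of summations is unconditional; in more general setups it follows from the exponential damping factor $e^{-\lambda_{k'} T}$ combined with Fubini applied to the $\ell^2(\mathcal V,\mu)$ expansion, which is the only analytic point that requires explicit bookkeeping.
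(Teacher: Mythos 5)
Your proof is correct and follows essentially the same spectral-expansion route as the paper: both arguments reduce to the identity $e^{-T\Delta}\varphi_{k'}=e^{-\lambda_{k'}T}\varphi_{k'}$ together with completeness of the eigenbasis in $L^2(\mathcal V,\mu)$. The only cosmetic difference is that the paper expands the kernel itself by computing its Fourier coefficients, $(p_{T,v_i},\varphi_{k'})_{L^2}=P_T\varphi_{k'}(v_i)=e^{-\lambda_{k'}T}\varphi_{k'}(v_i)$, and then cites the standard heat-kernel expansion, whereas you expand the propagated signal and recover the kernel via the uniqueness from Theorem~1 --- two equivalent ways of packaging the same computation.
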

Based on this result, we resort to the approximation of NHK by defining a randomized kernel in a similar form as Eqn.~\eqref{eqn_kerdecomp}, leading to the following formulation of randomized NHK:
\begin{equation} \label{eqn_randfour}
\begin{split}
    \mbox{(Randomized NHK)} \quad\kappa_\theta(v_i, v_j, l\mapsto l+k)\triangleq \frac{1}{m}\sum_{k'=0}^{m} e^{-\lambda_{k'} T}  \left[\sigma\left(\boldsymbol{W}_{k'} \mathbf h_i\right)^\top \sigma\left(\boldsymbol{W}_{k'} \mathbf h_j\right)\right],
\end{split}
\end{equation}
where $\sigma\left(\boldsymbol{W}_{k'} \mathbf{h}_{i}\right)$ is used to proximate $\varphi_{k'}\left(v_i\right)$, $\boldsymbol{W}_{k'} = \left[\boldsymbol{\phi}_{1,k'}, \boldsymbol{\phi}_{2,k'}, \cdots, \boldsymbol{\phi}_{s,k'}\right]^{\top}$ is a transformation matrix, $\boldsymbol{\phi} \sim \mathcal{N}\left(\mathbf{0}, \boldsymbol{I}_{d}\right)$ is a $d$-dimensional random variable from Gaussian distribution. In fact, under certain choice of activation function $\sigma$, Eqn.~\eqref{eqn_randfour} could approximate a diversity of kernels~\cite{rahimi2007random,cho2009kernel}. This design essentially enforces the alignment of teacher and student for arbitrary underlying manifold.

\textbf{Training Scheme.}\; We follow the standard training paradigm in KD literature~\cite{hinton2015distilling,gou2021knowledge}: the teacher is pre-trained by a supervised prediction loss involving all labeled nodes in $\tilde{\mathcal V}$. After teacher is well-trained, we fix $\theta^*$ and train the student model according to
\begin{equation} \label{eqn_trainstudent}
    \theta = \arg\min_{\theta} \mathcal L_{pre}(\hat{\mathbf Y}_{\theta}, {\mathbf Y}) + \frac{\alpha}{L}\sum_{l=1}^{L}\mathcal L^{(l)}_{dis}(\mathbf K_{\theta^*,\mathcal V}, \mathbf K_{\theta}),
\end{equation}
where ${\mathbf Y}$ denotes ground-truth labels of labeled nodes in ${\mathcal V}$, and $\hat{\mathbf Y}_{\theta}$ denotes the predictions of student model $f_{\theta}$ on ${\mathcal G}$, $\mathcal L_{dis}$ is the distillation loss defined by Eqn.~\eqref{eqn-dis}, $L$ denotes the total number of layers.

 \subsection{Parametric Geometric Distillation} \label{sec-model-var} 
Inheriting the similar spirit of auto-encoding Bayes~\cite{vae-iclr2014}, we introduce a \textit{variational inverse-NHK} that is independently parameterized, denoted as $\kappa_\phi^\dagger$, whose existence is guaranteed by the invertibility of NHK matrices. Together with $\kappa_\theta$, they define a symmetric form characterizing feature propagation:
{\begin{align}
    \mbox{(Forward)}\quad&\mathbf h_i^{(l+k)}=\sum_{v_j\in \mathcal V} \kappa_\theta(v_i,v_j,l \mapsto l+k) \cdot \mathbf h_j^{(l)} \mu(v_j),\label{eqn_dual1}\\
    \mbox{(Backward)}\quad&\mathbf h_i^{(l)}=\sum_{v_j\in \mathcal V} \kappa_\phi^\dagger(v_i,v_j,l+k \mapsto l) \cdot \mathbf h_j^{(l+k)} \mu(v_j). \label{eqn_dual2}
\end{align}}
In practice, we follow existing kernel learning approaches~\cite{wilson2016deep} and parameterize the inverse-NHK as
\begin{equation} \label{eqn_PGKD}
    \kappa_\phi^\dagger(v_i, v_j, l+k\mapsto l) = g_\phi(\mathbf h_i^{(l+k)})^\top g_\phi(\mathbf h_j^{(l+k)}),
\end{equation}
where $g_\phi: \mathbb R^d \rightarrow \mathbb R^s$ is the associated learnable non-linear mapping. Given a pre-trained teacher model, distilling geometric knowledge boils down to 1) establishing equivalence of Eqn.~\eqref{eqn_dual1} and Eqn.~\eqref{eqn_dual2}, and 2) matching pseudo-inverse NHK matrices for teacher and student models (respectively denoted as $\mathbf K^\dagger_{\theta^*,\mathcal V}$ and $\mathbf K^\dagger_{\theta}$ with clear meanings), leading to the training scheme as follows.

\textbf{Training Scheme.}\; Based on Eqn.~\eqref{eqn_dual2}, we can define a \emph{reconstruction loss} with respect to the teacher model (similar applies to the student model) as
\begin{equation}
\begin{split}
    \mathcal L_{rec}(\mathbf H_{t}^{(l+k)}, \mathbf H_{t}^{(l)}) &= \|\mathbf K_{\theta^*}^\dagger \mathbf H_{t}^{(l+k)} - \mathbf H_{t}^{(l)}\|_F^2.
\end{split}
\end{equation}
Then, minimizing the reconstruction loss with fixed GNN model parameter $\theta$ amounts to optimizing the variational parameter $\phi$, and minimizing prediction and distillation losses given fixed $\phi$ amounts to optimizing the student model parameter $\theta$: 
{\begin{align} 
    \quad \phi\gets &\arg\min_\phi \quad  \mathcal L_{rec}\left(\mathbf H_{t}^{(l+k)}, \mathbf H_{t}^{(l)}\right) + \mathcal L_{rec}\left(\mathbf H^{(l+k)}, \mathbf H^{(l)}\right),\label{eqn_em1}\\
    \theta\gets &\arg\min_\theta\quad \mathcal L_{pre}\left(\hat{\mathbf Y}_\theta, \mathbf Y\right) + \alpha \mathcal L_{dis}\left(\mathbf K_{\theta^*,\mathcal V}^\dagger, \mathbf K_{\theta^*,\mathcal V}^\dagger,l+k\mapsto l\right).\label{eqn_em2}
\end{align}}
Applying two steps iteratively adds up to an EM-like algorithm for training the student model. In practice, we set $l+k$ as the last layer, and $l$ as the first layer to use as much information as possible. We justify the parametric GKD approach in Appendix.~\ref{proof_vi} by showing it essentially explores the true NHK behind GNN.

\section{Experiments} 

We conduct experiments to validate the efficacy of our method on graph-structured data in terms of various types of privileged geometric knowledge, combinations of teacher-student GNN architectures and potential application scenarios. We use three benchmark datasets Cora~\cite{mccallum2000automating}, Citeseer~\cite{sen2008collective}, Pubmed~\cite{namata2012query}, and a larger dataset OGBN-Arxiv~\cite{hu2020open} for node classification tasks. More implementation details and experimental results are deferred to Appendix. The codes are available at \url{https://github.com/chr26195/GKD}.

\paragraph{Implementation and Competitors.} We consider the following variants of the proposed GKD. 1) \emph{GKD-G}: non-parametric Gaussian NHK; 2) \emph{GKD-S}: non-parametric Sigmoid NHK; 3) \emph{GKD-R}: randomized NHK; 3) \emph{PGKD}: parametric NHK. We choose KD methods that is representative in its own category for comparison, including \emph{KD}~\cite{hinton2015distilling}, \emph{FSP}~\cite{yim2017gift}, \emph{LSP}~\cite{yang2020distilling}. 
We also report the performances of teacher and student model trained with the standard classification loss, short as \emph{Teacher} and \emph{Student}. The teacher model is trained using the complete graph $\tilde{\mathcal G}$, and, to calibrate with all other methods, tested using the partial graph ${\mathcal G}$. Besides, we consider an \emph{Oracle} model which is both trained and tested on $\tilde{\mathcal G}$, which naturally takes an advantaged place given more information during inference. 
Since our method is compatible with the vanilla KD paradigm~\cite{hinton2015distilling}, we report the performance delivered by their combinations (i.e., GKD+KD and PGKD+KD).

\paragraph{Experiment Settings.} We investigate on various experimental settings according to different types of privileged geometric knowledge. In the case of \emph{edge-aware geometric knowledge}, the teacher model has access to additional edge information, i.e., $\mathcal E \subset \tilde{\mathcal E}$ and $\mathcal V = \tilde{\mathcal V}$. In the case of \emph{node-aware geometric knowledge}, the teacher model has access to additional node information, i.e., $\mathcal V \subset \tilde{\mathcal V}$ and $\mathcal E = \tilde{\mathcal E}\cap \{\mathcal V\times \mathcal V\}$. We also consider other conventional KD settings including model compression, self-distillation and online distillation, which will be illustrated in detail. The backbone $f_\theta$ is set as $3$-layer GCN~\cite{kipf2016semi} for both student and teacher models, unless otherwise stated. 

\subsection{Main Results}

 \paragraph{Edge-Aware Geometric Knowledge.} 
We report results for the edge-aware geometric knowledge setting. To quantify the privileged information, we set the quantity $(|\tilde{\mathcal E}|-|\mathcal E|)/|\tilde{\mathcal E}|$, called \emph{privileged information ratio} (PIR), as $0.5$. As shown in Tab.~\ref{tab_edge}, all variants of GKD outperform other KD baselines on both datasets, and significantly exceeds both Student and Teacher models. Further, GKD and its variants rival, if not surpass, the Oracle model. In other words, the student model trained using GKD could use far less graph topological information to achieve very close performance to competitors that are aware of the full graph topology during inference. Furthermore, the parametric PGKD performs better than its non-parametric counterparts in most cases, and GKD-R is the most effective non-parametric method in general. Despite that, GKD-G and GKD-S are also effective while being simpler.
We presume that the performance variation of different GKD realizations stem from the different geometric property governed by the feature of datasets. 

 \paragraph{Node-Aware Geometric Knowledge.} 
We further investigate on the node-aware geometric knowledge setting where the teacher model has access to more labeled nodes and their relations with the rest nodes. We set the PIR w.r.t. labeled nodes, defined as $(|\tilde{\mathcal V}_{train}| - |\mathcal V_{train}|)/|\tilde{\mathcal V}_{train}|$, to $0.5$. A unique challenge of this setting compared to the edge-ware counterpart is that, apart from graph topological information, the student model has less labeled training samples. As we can see from Tab.~\ref{tab_node}. The proposed GKD and its variants again consistently outperform KD baselines throughout all the cases, surpasses both Student and Teacher models, and are even as competitive as Oracle. 

\begin{figure}[t]
\RawFloats
\begin{minipage}{0.49\linewidth}
\resizebox{1\textwidth}{!}{
\setlength{\tabcolsep}{3mm}{
\begin{tabular}{@{}l|c|c|c@{}}
\toprule
        & Cora & CiteSeer & PubMed \\ \midrule
Oracle  & \underline{$88.63 \pm 0.48$} & \underline{$73.64 \pm 0.48$} & \underline{$87.16 \pm 0.19$} \\
Teacher &  $84.61 \pm 0.37$  & $70.88 \pm 0.62$  &  $84.42 \pm 0.52$  \\
Student &  $83.84 \pm 1.32$  &  $69.94 \pm 0.76$ &  $85.35 \pm 0.43$ \\ \midrule
KD      & \underline{$84.84 \pm 1.19$} &  $70.04 \pm 0.37$  & $85.58 \pm 0.32$ \\
FitNets &   $83.72 \pm 1.45$   &  $69.99 \pm 0.56$  &  \underline{$85.66 \pm 0.27$}  \\ 
FSP &  $83.55 \pm 2.19$ & $71.43 \pm 1.26$  &  $85.46 \pm 0.34$  \\ 
LSP &  $83.99 \pm 1.39$  &  $70.23 \pm 0.79$ &    $85.37 \pm 0.49$   \\ \midrule
GKD-G    &  $87.68 \pm 1.07$  &  $73.04 \pm 0.70$  &  $85.74 \pm 0.38$   \\ 
GKD-S    &  $88.01 \pm 0.79$ &  $72.46 \pm 0.52$  & $85.94 \pm 0.43$ \\
GKD-R    & \textbf{88.48 $\pm$ 0.59} & $72.97 \pm 0.53$ & $86.19 \pm 0.55$ \\
PGKD    & $88.41 \pm 0.62$ & \textbf{73.12 $\pm$ 0.58} & \textbf{86.41 $\pm$ 0.24} \\ \midrule
GKD+KD    & $88.95 \pm 0.30$ & $73.21 \pm 0.53$ & $86.29 \pm 0.28$ \\
PGKD+KD    & \textbf{89.09 $\pm$ 0.40} & \textbf{73.45 $\pm$ 0.48} &        \textbf{86.48 $\pm$ 0.52}\\ \bottomrule
\end{tabular}}}
\captionof{table}{Results of node classification accuracy for the edge-aware knowledge setting.}\label{tab_edge}
\end{minipage}
\hfill
\begin{minipage}{0.49\linewidth}
\resizebox{1\textwidth}{!}{
\setlength{\tabcolsep}{3mm}{
\begin{tabular}{@{}l|c|c|c@{}}
\toprule
        & Cora & CiteSeer & PubMed \\ \midrule
Oracle  & \underline{$88.63 \pm 0.48$} & \underline{$73.64 \pm 0.48$} & \underline{$87.16 \pm 0.19$} \\
Teacher &  $87.27 \pm 0.51$ &  $72.92 \pm 0.90$  & $85.98 \pm 0.23$  \\
Student &  $84.84 \pm 1.61$  &   $70.32 \pm 1.12$   &   $84.74 \pm 0.27$   \\ \midrule
KD      &  \underline{$86.71 \pm 0.77$} & $71.96 \pm 1.10$ & $85.55 \pm 0.45$  \\
FitNets & $86.09 \pm 1.12 $ &  \underline{$72.00 \pm 0.78$}&    \underline{$85.78 \pm 0.26$} \\ 
FSP & $85.85 \pm 1.66$ & $70.92 \pm 1.46$  &  $85.20 \pm 0.45$ \\ 
LSP & $85.67 \pm 1.22$ & $70.66 \pm 1.01$  & $85.71 \pm 0.50$ \\ \midrule
GKD-G    & $88.66 \pm 0.85$ &  $73.18 \pm 0.88$ & $86.07 \pm 0.45$\\ 
GKD-S    & $88.54 \pm 0.52$ & $72.85 \pm 0.57$ &  $86.10 \pm 0.42$ \\
GKD-R    & $88.98 \pm 0.39$ &  $72.80 \pm 0.22$ & \textbf{86.16 $\pm$ 0.33} \\
PGKD   & \textbf{89.15 $\pm$ 0.45} & \textbf{73.33 $\pm$ 0.36} &  $86.09 \pm 0.54   $   \\ \midrule
GKD+KD    & $89.10 \pm 0.44$ & $72.94 \pm 0.80$  &  \textbf{86.24 $\pm$ 0.26} \\
PGKD+KD    & \textbf{89.23 $\pm$ 0.61} & \textbf{73.41 $\pm$ 0.60}&   $86.20 \pm 0.36$     \\ \bottomrule
\end{tabular}%
}}
\captionof{table}{Results of node classification accuracy for the node-aware setting.}\label{tab_node}
\end{minipage}
\vspace{-5pt}
\end{figure}

\begin{table}[t]
	\label{tbl-bias}
	\centering
	\setlength{\tabcolsep}{3.5mm}{
	\resizebox{0.95\textwidth}{!}{
	\begin{tabular}{c|ccc|ccc}
		\toprule[1pt]
		\specialrule{0em}{1pt}{1pt}
		Setting & Oracle  & Teacher & Student & KD & GKD & PGKD   \\
		\specialrule{0em}{1pt}{1pt}
		\hline
		\specialrule{0em}{1pt}{1pt}
		Edge-Aware & \underline{71.46 $\pm$ 0.41} & 67.96 $\pm$ 0.78 & 66.41 $\pm$ 0.45 & 68.63 $\pm$ 1.21 & 70.90 $\pm$ 0.80 & \textbf{71.38 $\pm$ 1.01} \\
		Node-Aware & \underline{71.46 $\pm$ 0.41} & 69.35 $\pm$ 0.72 & 67.49 $\pm$ 0.65 & 68.86 $\pm$ 0.66 & \textbf{71.31 $\pm$ 0.83} & 71.27 $\pm$ 0.70 \\
		\bottomrule
	\end{tabular}}}
	\vspace{-10pt}
	\caption{Results of testing accuracy on OGBN-Arxiv dataset.\vspace{-15pt}}\label{tbl_bench}
\end{table}

 \paragraph{Larger Dataset.} 
Table~\ref{tbl_bench} presents results on a large graph, i.e., OGBN-Arxiv. We use the same PIR setting as citation networks, and choose the best variant of GKD to report in the table. Since the space complexity for GKD is $O(n^2)$, we randomly draw a mini-batch of nodes for computing the distillation loss in practice. Note that the loss function is unbiased as long as all nodes are evenly covered. It could also be seamlessly integrated with the original mini-batch method (by sampling ego-graphs) used for training large graphs without further modification. Again, we found our methods consistently outperform Teacher and Student models, and are close to the performance of the Oracle model, which suggests the effectiveness of GKD in large graphs. 

 \paragraph{Performance Variation with Privileged Ratio.} 
The results with respect to varying privileged (edge-aware and node-ware) information ratio are given respectively in Fig.~\ref{fig_edge} and Fig.~\ref{fig_node}. The performance of Oracle model is invariant as it is trained and tested on the same (complete) graph. In general, for Teacher model, Student model, and vanilla KD, their performance drops with increasing PIR quantifying the information loss. In contrast, our method is significantly more robust, only showing slight performance deterioration, exceeding the Teacher model, and approaching the Oracle model. {Besides, we find an interesting phenomenon that in the edge-aware setting on Pubmed dataset, the performance of Teacher model is the worst, presumably because that the Teacher is trained using fully observed graph, and may perform poorly once the privileged part of graph topology becomes unavailable at test time.}

\begin{figure*}[t!]
\centering
\subfigure[Cora]{
\includegraphics[height=68pt]{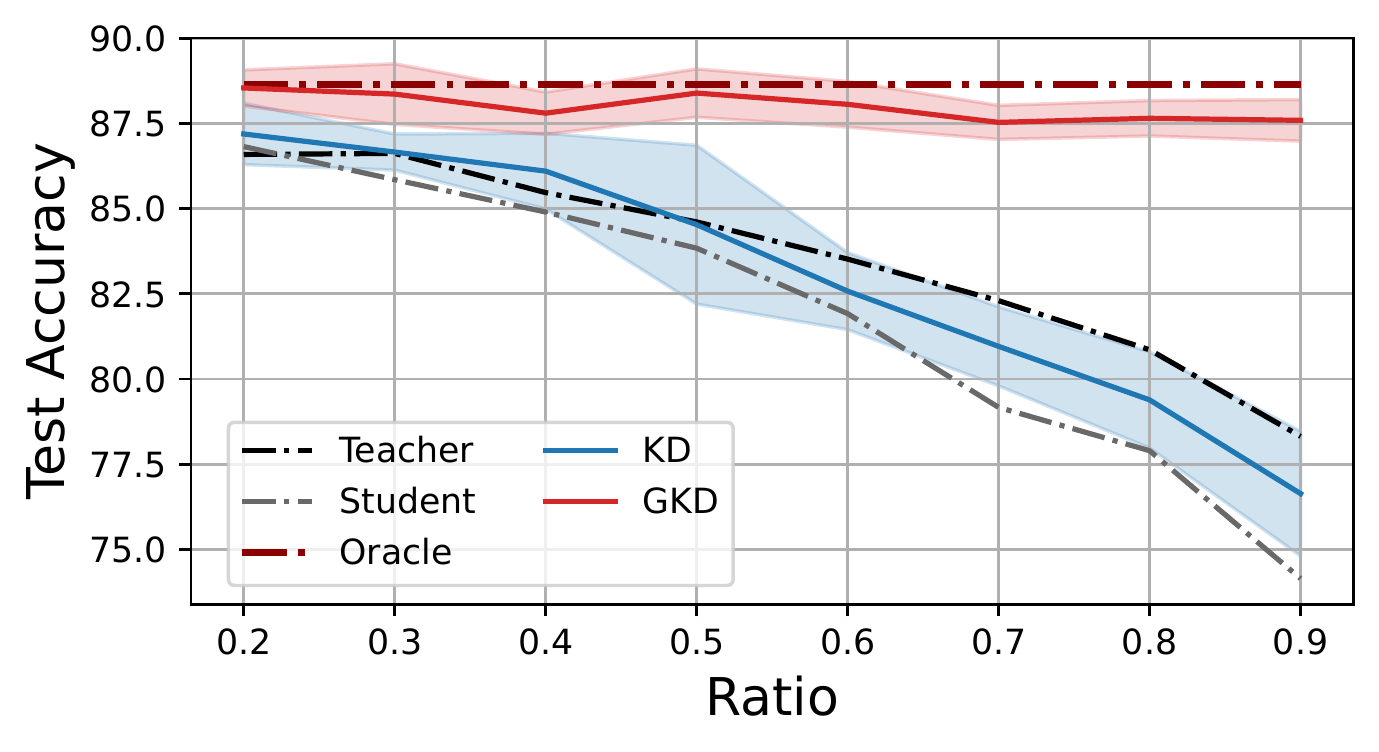}}
\subfigure[Citeseer]{
\includegraphics[height=68pt]{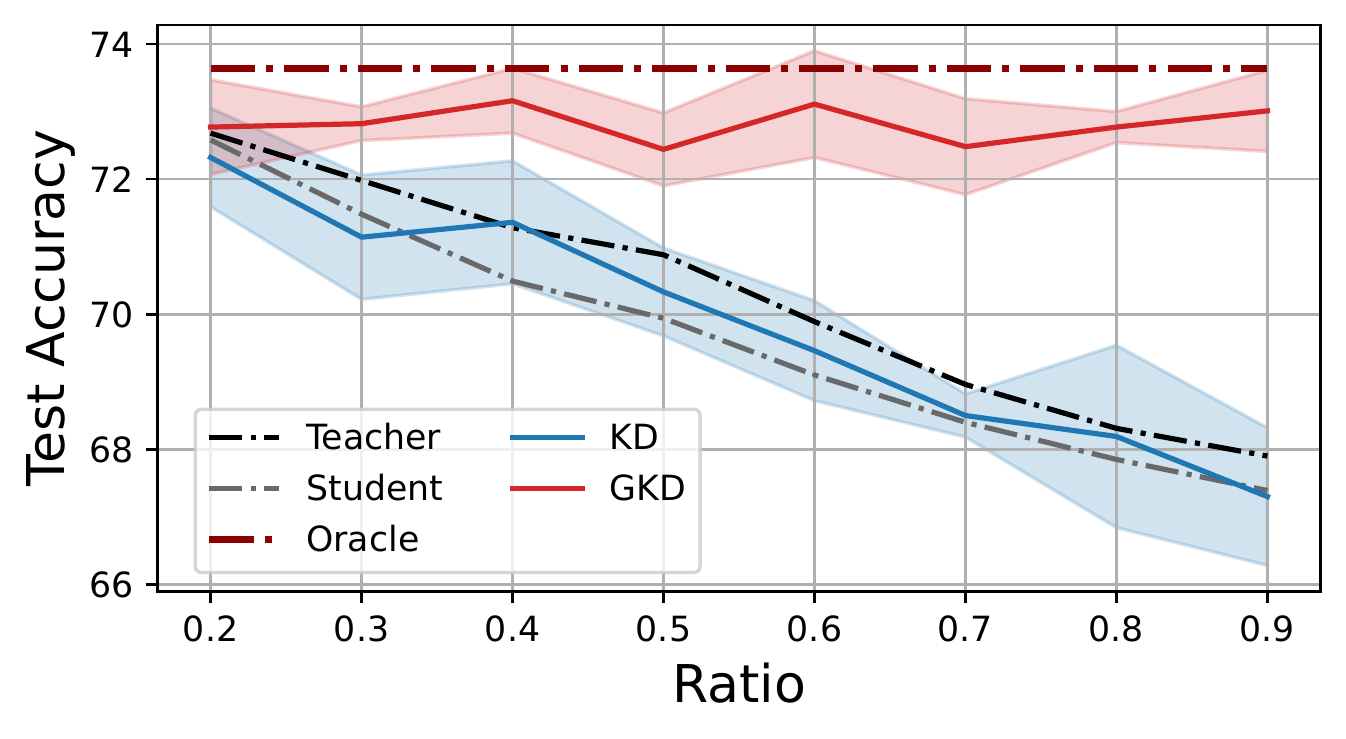}}
\subfigure[Pubmed]{
\includegraphics[height=68pt]{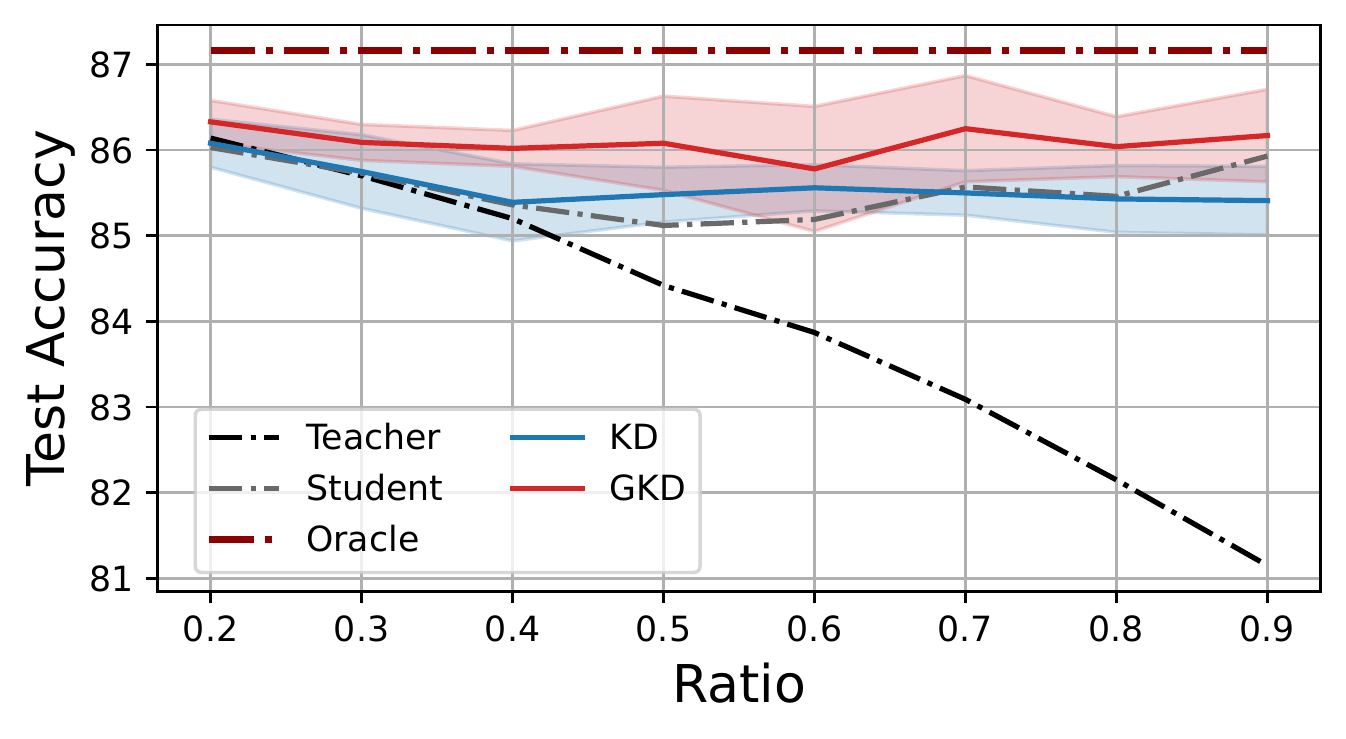}}
\vspace{-10pt}
\caption{Performance variation with increasing PIR for the edge-aware knowledge setting.\vspace{-10pt}}
\label{fig_edge}
\end{figure*}

\begin{figure*}[t!]
\centering
\subfigure[Cora]{
\includegraphics[height=68pt]{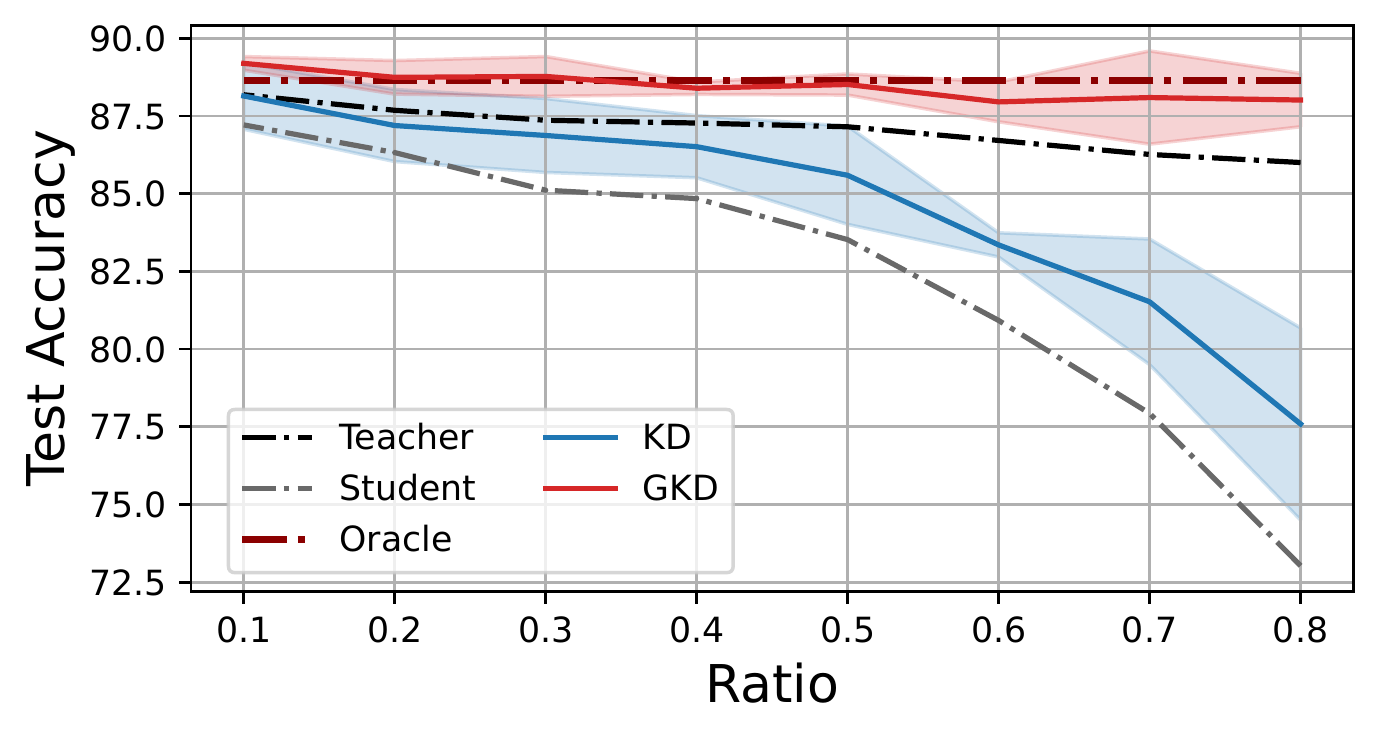}}
\subfigure[Citeseer]{
\includegraphics[height=68pt]{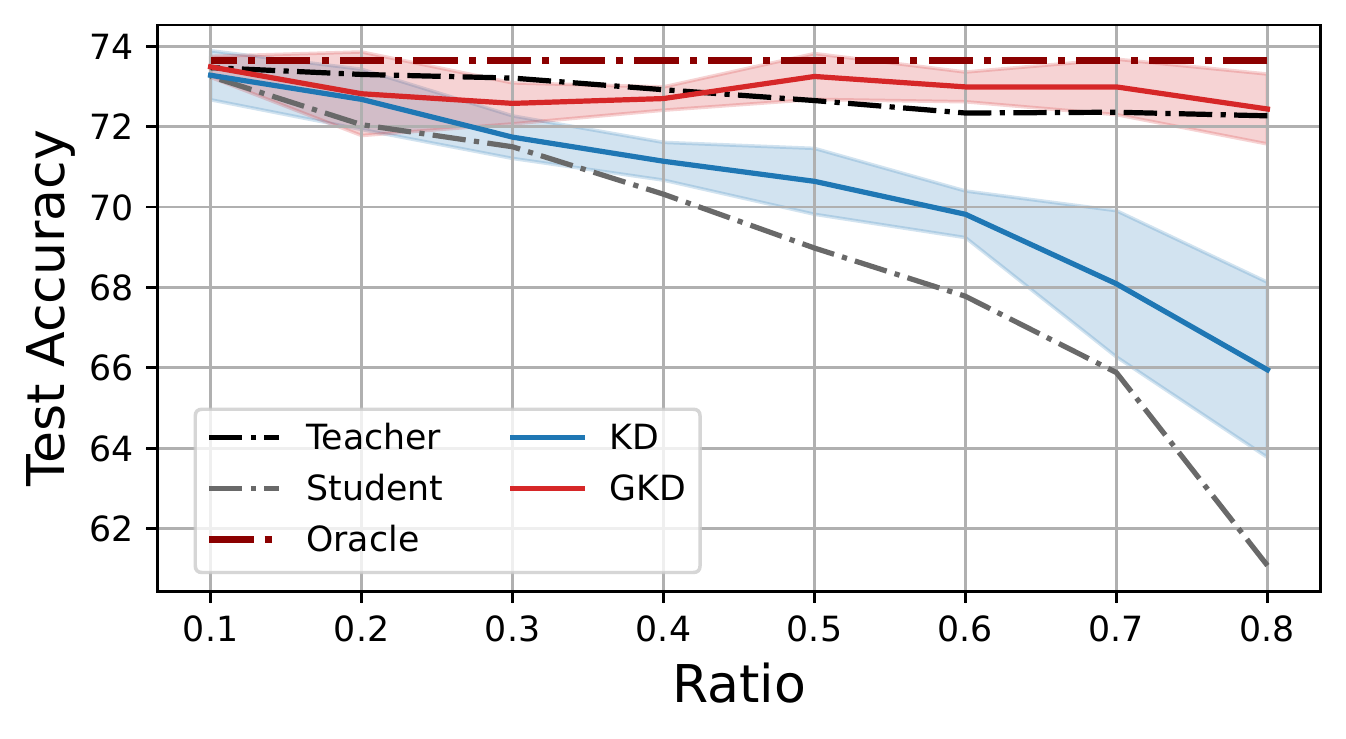}}
\subfigure[Pubmed]{
\includegraphics[height=68pt]{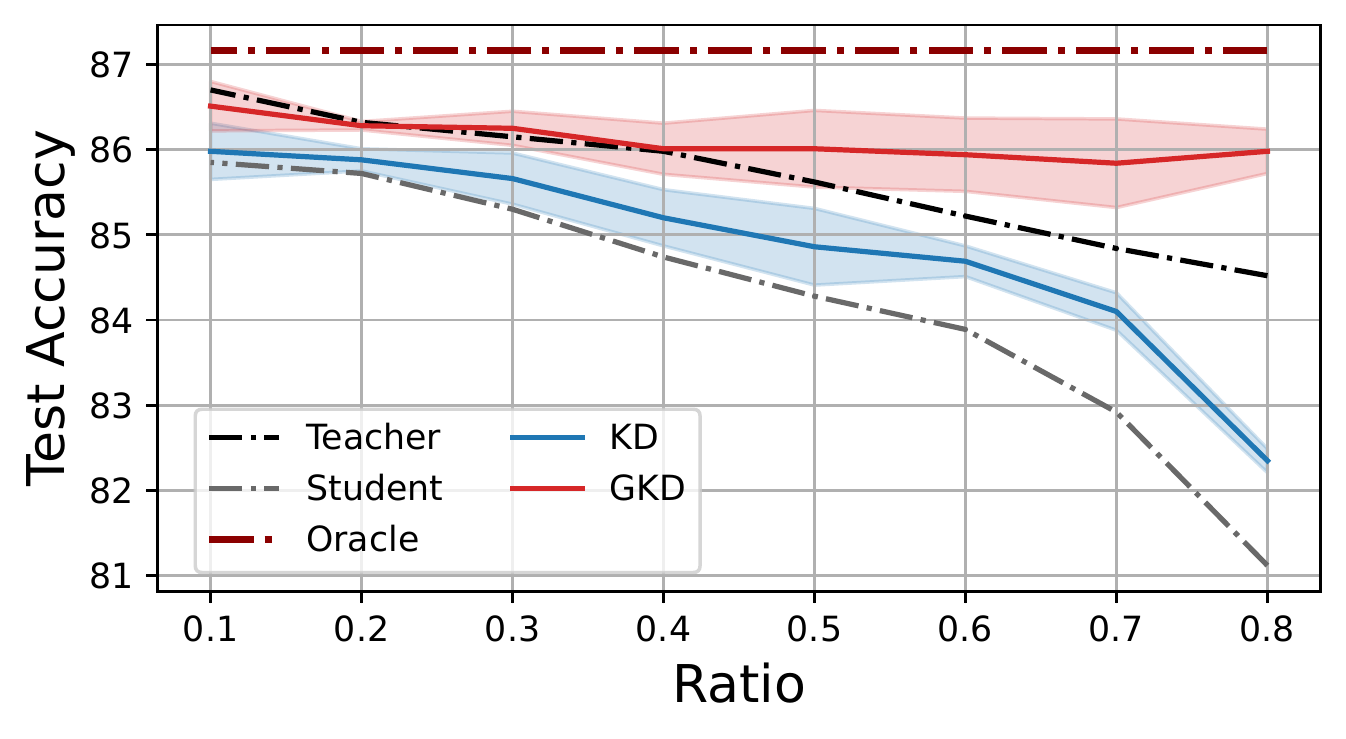}}
\vspace{-10pt}
\caption{Performance variation with increasing PIR for the node-aware knowledge setting.\vspace{-10pt}}
\label{fig_node}
\end{figure*}

\begin{figure*}[t!]
	\centering
	\includegraphics[width=0.99\textwidth]{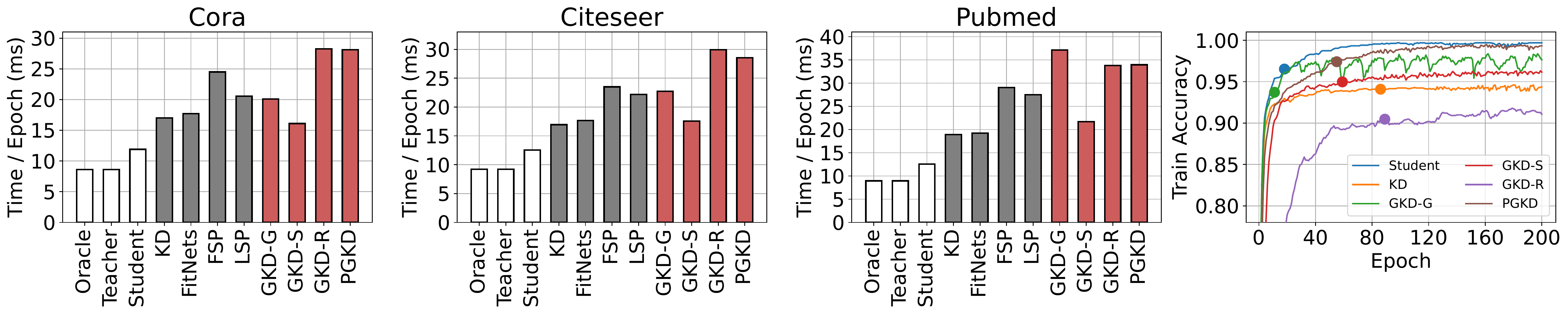}
	\caption{(Left three panels) Training time per epoch (ms) for GKD and baselines on citation networks. (Right panel) Convergence curves of different methods on Cora, where the circle denotes the epoch when the best validation accuracy has been reached.}\label{fig_scale}
\end{figure*} 

\subsection{Scalability Test} 
\paragraph{Training Time.} 
We report the the training time per epoch of different methods in the left three panels of Fig.~\ref{fig_scale}, where we use the whole graph for training on Cora and Citeseer, and set the batch size as $5,000$ for Pubmed. While the computation complexity for distillation loss of GKD (in both non-parametric and parametric cases) is $O(dn^2)$, we find that in practice their training time is not significantly worse than other KD baselines. In specific, all variants of GKD take less than two times of the training time of vanilla KD while yielding better performance, and the simplest GKD variant using Sigmoid NHK is even more efficient than baselines in some cases. It is also worth mentioning that the distilled student models are equally efficient by using sparser graph structure (in edge-aware setting) or smaller graph (in node-aware setting), which are suitable for being deployed in latency-constrained scenarios.

\paragraph{Convergence Speed.}
We also compare the convergence speed of different variants of GKD and other methods in the right panel of Fig.~\ref{fig_scale}, where we use the same model architecture and optimizer for different methods and finetune other hyper-parameters to ensure fair comparison. Since the loss function is different for different methods, we report the convergence of training accuracy. As shown by the figure, GKD variants using Sigmoid and Gaussian NHKs can converge as fast as the vanilla KD. While GKD-R and PGKD in general take more epochs to converge, they can converge within $200$ epochs (which is the default setting recommended in~\cite{kipf2016semi}) and the best validation accuracy is achieved within the first $100$ epochs. We also find that the EM-style algorithm used for training PGKD does not cost too much extra epochs for convergence than the non-parametric GKD which is reasonable since the learnable mapping $g_\phi$ used in Eqn.~\eqref{eqn_em1} is an independent module that is not involved in GNN's feed-forward computation.

 \subsection{Other Settings} 
 \vspace{-3pt}
 
 \begin{table}[t!]
\centering
\vspace{-5pt}
\resizebox{0.98\textwidth}{!}{
\setlength{\tabcolsep}{2mm}{
\begin{tabular}{@{}l|c|c|c|c|c|c|c@{}}
\toprule
   &Oracle & Teacher & Student & GKD-G & GKD-S & GKD-R & PGKD \\ \midrule
Offline  & $88.63\pm 0.48$ & $84.61\pm 0.37$ & $83.84\pm 1.32$  & $87.68\pm 1.07$     & \textbf{$88.01\pm 0.79$}     & $88.48\pm 0.59$ & $88.41\pm 0.62$  \\ 
Online   & $88.63\pm 0.48$ & $84.61\pm 0.37$ & $83.84\pm 1.32$   & \textbf{$87.75\pm 0.65$}  & $87.63\pm 0.65$ & $88.28\pm 0.80$ & \textbf{$88.50\pm 0.33$} \\ \bottomrule
\end{tabular}%
}}
\vspace{-5pt}
\caption{Comparison of offline and online distillation on Cora in edge-aware setting. \label{tbl-online} \vspace{-10pt}} 
\end{table}
 
\begin{wrapfigure}{l}{0.55\textwidth}
\centering
\vspace{0pt}
\resizebox{1\textwidth}{!}{
\setlength{\tabcolsep}{1.5mm}{
\begin{tabular}{c|cc|c|c}
\toprule
Setting & \multicolumn{1}{c|}{Teacher} & Student & KD & GKD \\ \hline
\hline\specialrule{0em}{2pt}{2pt}
     \multirow{3}{*}{\rotatebox{90}{Compression}}      & \multicolumn{1}{c|}{}  & \makecell[c]{SGC\\ $85.93 \pm 0.17$} &\makecell[c]{SGC\\$86.32 \pm 0.32$}&    \makecell[c]{SGC\\$87.15 \pm 0.85$}   \\ \cmidrule(l){3-5} 
 &\multicolumn{1}{c|}{\makecell[c]{GCN-64 \\ $88.76 \pm 0.34$}\quad}  & \makecell[c]{GCN-8 \\ $84.52 \pm 1.34$} & \makecell[c]{GCN-8 \\ $87.50 \pm 1.04$} &  \makecell[c]{GCN-8 \\ $88.30 \pm 0.46$}     \\ \cmidrule(l){3-5} 
     & \multicolumn{1}{c|}{} & \makecell[c]{GCN-16 \\ $87.24 \pm 0.43$} &  \makecell[c]{GCN-16 \\ $88.09 \pm 0.79$}  & \makecell[c]{GCN-16 \\ $88.62 \pm 0.50$} \\ \hline\hline\specialrule{0em}{2pt}{2pt}
\multirow{2}{*}{\rotatebox{90}{\makecell[c]{Self-Distil}}}      &  \multicolumn{2}{c|}{\makecell[c]{GCN-32 \\ $88.63 \pm 0.48$}} & \makecell[c]{GCN-32 \\ $88.98 \pm 0.34$} &   \makecell[c]{GCN-32 \\ $89.23 \pm 0.52$}   \\ \cmidrule(l){2-5} 
 & \multicolumn{2}{c|}{\makecell[c]{GCN-16 \\ $87.24 \pm 0.43$}}  & \makecell[c]{GCN-16 \\ $87.62 \pm 0.52$} &  \makecell[c]{GCN-16 \\ $88.56 \pm 0.40$}     \\ \bottomrule
\end{tabular}}} 
\captionof{table}{Node classification accuracy on Cora in settings: 1) model compression; 2) self-distillation.\vspace{-2pt}}\label{tab_other}
\end{wrapfigure}

\paragraph{Model Compression.}
We report the performance of GKD in the conventional model compression setting~\cite{hinton2015distilling} where the teacher and student are different-sized GNN models. In this setting, we train and test both teacher and student models on the complete graph. We use GCN as the backbone for teacher with a relatively large hidden size $64$, and use SGC or GCN (with hidden size $8$ and $16$) as the student model. We report the best result among all variants of GKD. As shown in the first section of Tab.~\ref{tab_other}, our method achieves notable improvements over the student model, rendering it as a useful KD approach for model compression.

\paragraph{Self-Distillation.}
We further report results for the setting of \emph{self-distillation}~\cite{furlanello2018born,mobahi2020self}, which is a special case of KD when the teacher's and student's architectures are identical, often used for refining their own performance. The results as shown in Tab.~\ref{tab_other} validate that GKD could also be used to effectively boost GNN's own performance.

\paragraph{Online Distillation.} Table.~\ref{tbl-online} shows the performance of different variants of GKD for online distillation where both the teacher model and the student model are trained in an end-to-end manner, in contrast to offline distillation where the teacher model is pre-trained. The results demonstrate the potential usage of GKD in this setting.

\section{Conclusion, Current Limitations and Future Works}
This paper formalizes the problem of graph topological knowledge transfer for GNNs. We investigate on the implication of heat kernel in GNNs and propose the novel notion of neural heat kernel. 
We leverage it to characterize the geometric property of the underlying manifold for graphs, and propose the framework of geometric knowledge distillation to transfer geometric knowledge from a teacher GNN model to a student GNN. Experimental results validate the effectiveness of our approach in various practical settings.

Despite that the proposed GKD is effective in various tasks and possesses decent training efficiency in practice, its theoretical space and time complexities are $O(n^2)$ and $O(dn^2)$ and the parametric instantiation may take some extra time to converge compared to the pure non-KD counterpart. The algorithmic complexity can be reduced by using mini-batch training, and there also exist ways to reduce the overhead such as pre-computing teacher's NHK matrix or using low-rank approximation.
Finally, we do not foresee any direct negative societal impacts of this work.

\section*{Acknowledgement}
This work was partly supported by National Key Research and Development Program of China (2020AAA0107600), National Natural Science Foundation of China (61972250, 72061127003), and Shanghai Municipal Science and Technology (Major) Project (22511105100, 2021SHZDZX0102). 

\newpage

{
\bibliographystyle{plain}
\bibliography{ref}
}
\newpage

\newpage
\appendix
\section{Proof for Theorem 1} \label{ap_t1}
Suppose $(\mathcal M,\mu)$ is the base manifold of dimension $n$ with respect to graph $\mathcal G$ and GNN model $f_\theta$, and
$\Delta_\mu(\mathcal G, f_\theta)$ is the associated weighted Laplace operator with respect to $\mathcal G$ and $f_\theta$ (in the following shortened as $\Delta_\mu$). We leverage a Sobolev inequality on manifolds as Lemma 1.
\begin{lemma}~\cite{saloff2002aspects} ~\label{lemma1}
Let $x$ be a function from the local Sobolev space $\mathcal{W}_{loc}^{2\sigma}(\mathcal M)$ for a positive integer $\sigma>n/4$. Then, for any relatively compact open set $\Omega \subset \mathcal M$ and any set $ K \Subset \Omega$, there is a constant $C$ such that
\begin{equation}
    \sup_{K}|x| \leq C\|x\|_{\mathcal{W}^{2 \sigma}(\Omega)}.
\end{equation}
where the norm $\|\cdot\|_{\mathcal{W}^{2\sigma}}$ is defined as
\begin{equation}\label{eqn_sobonorm}
    \|x\|_{\mathcal{W}^{2 \sigma}}^{2}=\sum_{l=0}^{k}\left\|\Delta_{\mu}^{l} x\right\|_{L^{2}}^{2}
\end{equation}
\end{lemma}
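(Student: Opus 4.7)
The plan is to reduce this manifold Sobolev inequality to the classical Euclidean Sobolev embedding by a localization-plus-elliptic-regularity argument. First, I would exploit the relative compactness $K \Subset \Omega$ to cover $K$ by finitely many coordinate charts $(U_i, \varphi_i)$ whose closures lie in $\Omega$, together with a subordinate partition of unity $\{\chi_i\}$. In each chart the metric $g$ has smooth coefficients that are uniformly bounded above and bounded below (away from zero) on the compact set $\overline{U_i}$, so the weighted Laplace--Beltrami operator $\Delta_\mu$ pulls back to a uniformly elliptic second-order operator with smooth coefficients on a bounded Euclidean domain.

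The central step is to relate the intrinsic norm $\|x\|_{\mathcal{W}^{2\sigma}}^2=\sum_{l=0}^{\sigma}\|\Delta_\mu^l x\|_{L^2}^2$, which only involves iterated Laplacians, to the standard Euclidean Sobolev norm $\|x\|_{W^{2\sigma,2}}$ involving all weak derivatives up to order $2\sigma$. For this I would invoke interior elliptic regularity for $\Delta_\mu$: for any $\Omega'\Subset\Omega$ and any integer $l\geq 0$,
\begin{equation*}
\|x\|_{W^{2l,2}(\Omega')} \leq C_l\bigl(\|\Delta_\mu^l x\|_{L^2(\Omega)}+\|x\|_{L^2(\Omega)}\bigr).
\end{equation*}
Iterating this estimate for $l=0,1,\dots,\sigma$ and summing yields $\|x\|_{W^{2\sigma,2}(\Omega')}\leq C\,\|x\|_{\mathcal{W}^{2\sigma}(\Omega)}$ on a slightly shrunken neighborhood that still contains $K$.

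With the comparison in place, I apply the classical Sobolev embedding theorem on bounded Euclidean domains: $W^{2\sigma,2}\hookrightarrow C^0$ continuously whenever $2\sigma>n/2$, i.e., $\sigma>n/4$, which is exactly the hypothesis. This gives $\sup_{\varphi_i(K\cap U_i)}|x\circ\varphi_i^{-1}|\leq C\,\|x\|_{W^{2\sigma,2}(\varphi_i(U_i))}$. Combining with the Sobolev-norm bound above on each chart, then taking the maximum over the finite cover (weighted by the partition of unity $\chi_i$), yields the asserted global sup bound $\sup_K|x|\leq C\|x\|_{\mathcal{W}^{2\sigma}(\Omega)}$.

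The main obstacle is securing the elliptic regularity constants uniformly across charts: the coefficients of $\Delta_\mu$ in each chart depend on the pulled-back metric, and its derivatives must be controlled to iterate the regularity estimate $\sigma$ times. Because the charts are finite in number with compact closures inside $\Omega$, the ellipticity and coefficient bounds remain uniform, so the constants aggregate into a single $C=C(\Omega,K,\Delta_\mu,\sigma)$ as required. Handling the overlaps and commutators that appear when differentiating products $\chi_i x$ is the technical heart of the argument but involves only standard product-rule bookkeeping once the elliptic estimate is in hand.
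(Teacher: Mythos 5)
Your argument is sound, but note that the paper does not actually prove this lemma: it is imported verbatim as a cited result from Saloff-Coste's monograph, so there is no in-paper proof to compare against. What you have reconstructed is the standard proof of the local Sobolev embedding on a weighted manifold --- finite chart cover of $K \Subset \Omega$, interior elliptic regularity to pass from the iterated-Laplacian norm $\sum_{l\le\sigma}\|\Delta_\mu^l x\|_{L^2}^2$ to the full $W^{2\sigma,2}$ norm on a nested chain of shrinking domains, then the classical embedding $W^{2\sigma,2}\hookrightarrow C^0$ under $2\sigma>n/2$ --- and this is essentially how the cited source establishes it. The only bookkeeping point worth making explicit is that each application of interior regularity costs a strict shrinking of the domain, so you need a finite nested chain $K\Subset\Omega_\sigma\Subset\cdots\Subset\Omega_0=\Omega$ rather than a single $\Omega'$; and your displayed estimate with only the top and bottom terms on the right implicitly uses an interpolation inequality to absorb the intermediate $\|\Delta_\mu^j x\|_{L^2}$, which is harmless here since the target norm $\|x\|_{\mathcal{W}^{2\sigma}(\Omega)}$ contains all of them anyway. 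Incidentally, the paper's displayed definition of the norm sums to an undeclared index $k$; it should read $\sum_{l=0}^{\sigma}$, consistent with how you used it.
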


Suppose $\sigma$ is the smallest integer larger than $n / 4$, and $\mathbf P_t = e^{-t\mathcal L}$ is the heat kernel semigroup, where $\mathcal{L}=-\left.\Delta_{\mu}\right|_{W_{0}^{2}}$ is the Dirichlet Laplace operator for the base manifold regarding graph $\mathcal G$ and GNN model $f_\theta$, we have the following lemma.

\begin{lemma}\label{lamma2}
 For any function $x \in L^{2}(\mathcal M)$, $t>0$, and set $K \Subset \mathcal M$, it holds that
\begin{equation}\label{eqn_lamma2}
    \sup _{K}\left|\mathbf P_{t} x\right| \leq C\left(1+t^{-\sigma}\right) \|x\|_{L^{2}(\mathcal M)},
\end{equation}
where $C$ is a constant depending on $K, \mathbf{g}, \mu, n$.
\end{lemma}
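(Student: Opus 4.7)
The plan is to apply Lemma~\ref{lemma1} to the function $u = \mathbf P_t x$ and then estimate each term $\|\Delta_\mu^l u\|_{L^2}$ appearing in the Sobolev norm~\eqref{eqn_sobonorm} by spectral calculus for the self-adjoint operator $\mathcal L$. First I would verify the admissibility: since $\mathcal L$ is nonnegative and self-adjoint on $L^2(\mathcal M)$, for any $x \in L^2(\mathcal M)$ the function $\mathbf P_t x = e^{-t\mathcal L} x$ lies in the domain of every power $\mathcal L^l$ for $t > 0$, and $\Delta_\mu^l (\mathbf P_t x) = (-1)^l \mathcal L^l e^{-t\mathcal L} x$ in the $L^2$ sense, so $\mathbf P_t x$ belongs to $W_{loc}^{2\sigma}(\mathcal M)$ and Lemma~\ref{lemma1} is applicable.

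The core estimate comes from the elementary scalar bound
\begin{equation*}
\sup_{\lambda \geq 0} \lambda^{l} e^{-t\lambda} = \left(\frac{l}{e\,t}\right)^{l},
\end{equation*}
valid for every integer $l \geq 0$ (with the convention that the $l=0$ value is $1$). By the spectral theorem applied to $\mathcal L$, this immediately gives the operator norm bound $\|\mathcal L^l e^{-t\mathcal L}\|_{L^2 \to L^2} \leq c_l\, t^{-l}$ for $l \geq 1$ and $\|e^{-t\mathcal L}\|_{L^2\to L^2} \leq 1$. Consequently, for each $l = 0, 1, \ldots, \sigma$,
\begin{equation*}
\|\Delta_\mu^l \mathbf P_t x\|_{L^2(\mathcal M)} \leq c_l\, t^{-l}\, \|x\|_{L^2(\mathcal M)},
\end{equation*}
with the convention $t^{0} = 1$ when $l=0$.

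Plugging these into the definition~\eqref{eqn_sobonorm} of $\|\cdot\|_{W^{2\sigma}(\Omega)}$ (using the trivial monotonicity $\|\cdot\|_{L^2(\Omega)} \leq \|\cdot\|_{L^2(\mathcal M)}$) gives
\begin{equation*}
\|\mathbf P_t x\|_{W^{2\sigma}(\Omega)}^{2} \leq \Bigl(\sum_{l=0}^{\sigma} c_l^{2}\, t^{-2l}\Bigr)\, \|x\|_{L^2(\mathcal M)}^{2},
\end{equation*}
and the bracketed sum is bounded above by $C'(1 + t^{-\sigma})^{2}$ for a constant $C'$ depending only on $\sigma$ (hence on $n$). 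Combining with Lemma~\ref{lemma1} applied to $u = \mathbf P_t x$ on a relatively compact open set $\Omega \supset K$ yields $\sup_K |\mathbf P_t x| \leq C(1 + t^{-\sigma}) \|x\|_{L^2(\mathcal M)}$, where $C$ absorbs the Sobolev constant from Lemma~\ref{lemma1} (which depends on $K, \mathbf g, \mu, n$) and $C'$.

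The main delicate point I anticipate is the first step — namely, asserting that $\mathbf P_t x$ is smooth enough to lie in $\mathcal W_{loc}^{2\sigma}(\mathcal M)$ and that the spectral-theoretic identity $\Delta_\mu^l \mathbf P_t x = (-\mathcal L)^l e^{-t\mathcal L} x$ holds weakly. This requires invoking the fact that heat semigroups on weighted manifolds map $L^2$ into the intersection of domains of all powers of $\mathcal L$ for $t>0$, and that inside this intersection the Dirichlet operator $\mathcal L$ coincides with $-\Delta_\mu$ acting distributionally. Once this analytic point is in place, the remaining calculation is a direct combination of the spectral norm bound for $\lambda^l e^{-t\lambda}$ with the Sobolev embedding in Lemma~\ref{lemma1}.
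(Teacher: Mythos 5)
Your proposal is correct and follows essentially the same route as the paper's proof: applying the spectral theorem to bound $\|\mathcal L^{l} e^{-t\mathcal L}\|_{L^2\to L^2}$ via the maximum of $\lambda\mapsto\lambda^{l}e^{-t\lambda}$ at $\lambda=l/t$, summing these bounds in the Sobolev norm, and then invoking Lemma~\ref{lemma1}. The only difference is that you handle the squares in the Sobolev norm and the admissibility of $\mathbf P_t x$ in $\mathcal W^{2\sigma}_{loc}(\mathcal M)$ somewhat more carefully than the paper does, which is a point in your favor rather than a deviation.
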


\begin{proof}
Suppose $\left\{E_{\lambda}\right\}$ is the spectral resolution of the Dirichlet operator $\mathcal{L}$ for the base manifold. Consider the function $\Phi(\lambda)=\lambda^{k} e^{-t \lambda}$, where $t>0$ and $k\in \mathbb Z^+$. We have
\begin{equation}
\mathcal{L}^{k} e^{-t \mathcal{L}}=\int_{0}^{\infty}  \Phi(\lambda) d E_{\lambda}
\end{equation}

Since $\Phi(\lambda)$ is bounded on $[0,+\infty)$, $\mathcal{L}^{k} e^{-t \mathcal{L}}$ is also bounded, and hence we have $\mathcal{L}^{k}\left(e^{-t \mathcal{L}} x\right) \in L^{2}(\mathcal M)$. By noticing that the function $\lambda \mapsto \lambda^{k} e^{-t \lambda}$ takes its maximal value at $\lambda=k / t$, we have, for any $x \in L^{2}$,
\begin{equation} \label{eqn_lemma20}
    \begin{aligned}
\left\|\Delta_{\mu}^{k} \mathbf P_{t} x\right\|_{L^{2}} &=\left\|\mathcal{L}^{k} e^{-t \mathcal{L}} x\right\|_{L^{2}} \\
&=\left(\int_{0}^{\infty}\left(\lambda^{k} e^{-t \lambda}\right)^{2} d\left\|E_{\lambda} x\right\|_{L^{2}}^{2}\right)^{1 / 2} \\
& \leq \sup _{\lambda \geq 0}\left(\lambda^{k} e^{-t \lambda}\right)\left(\int_{0}^{\infty} d\left\|E_{\lambda} x\right\|_{L^{2}}^{2}\right)^{1 / 2} \\
&=\left(\frac{k}{t}\right)^{k} e^{-k}\|x\|_{L^{2}} .
\end{aligned}
\end{equation}

Using the definition of $\|\cdot\|_{\mathcal{W}^{2\sigma}}$ in Eqn.~\eqref{eqn_sobonorm} and the result of Eqn.~\eqref{eqn_lemma20}, we obtain
\begin{equation} \label{eqn_lemma21}
\begin{aligned}
&& \left\|\mathbf P_{t} x\right\|_{\mathcal{W}^{2 \sigma}}=&\sum_{k=0}^{\sigma}\left(\frac{k}{t}\right)^{k} e^{-k}\|x\|_{L^{2}}\\
&& \leq& C\left(1+\sum_{k=1}^{\sigma}\left(\frac{k}{t}\right)^{k} e^{-k}\right)\|x\|_{L^{2}} \\
&& \leq& C^{\prime}\left(1+t^{-\sigma}\right)\|x\|_{L^{2}} .
\end{aligned}
\end{equation}

Substituting $\mathbf P_t x$ into Lemma.~\ref{lemma1} and using the result of Eqn.~\ref{eqn_lemma21} yields Eqn.~\ref{eqn_lamma2}, and thus completes the proof.
\end{proof}

Consider the following equivalent expressions describing heat diffusion and feature propagation respectively,
\begin{equation} \label{eqn31}
\begin{split}
       \mathbf{H}^{(l+1)} = f_\theta(\mathbf{H}^{(l)}, \mathcal G), \;\mathbf{X}(t+\tau) = e^{-\tau \Delta(f_\theta,\mathcal G)} \mathbf{X}(t)
\end{split}
\end{equation}
where $\tau$ is a constant dependent on $f_\theta$ and $l$.
We can rewrite the single-layer NHK as a function $p_{\tau,v_i}$ such that for any $v_j \in \mathcal V$ (or equivalently $v_j \in \mathcal M$)
\begin{equation} \label{lemma2_equal}
    p_{\tau,v_i}(v_j) = \kappa_\theta^{(l)}(v_i, v_j).
\end{equation}
We define $g: \mathcal V \rightarrow \mathbb R^d$ as a function that outputs the feature of a node in $l$-th layer, which is clearly Lebesgue integrable by thinking of nodes embedded on the manifold, i.e., $g\in L^2(\mathcal M)$.

Proof for the existence of unique single-layer NHK boils down to 
proof that for any $v_i \in \mathcal M$, $l>0$ (i.e., $\tau >0$), there exists a unique function $p_{\tau, v_i} \in L^{2}(\mathcal M)$ such that, for all $g_\theta$,
\begin{equation} \label{eqn_pro1result}
    \mathbf P_{\tau} g_\theta(v_i)=\int_{v_j} p_{\tau, v_i} g_\theta(v_j) \mathrm{d} v_j .
\end{equation}

Fix a relatively compact set $K \Subset \mathcal M$. By Lemma~\ref{lamma2}, for all $\tau>0$ and $g_\theta \in L^{2}(\mathcal M)$, the function $\mathbf P_{\tau} g_\theta(v_i)$ admits the estimate
\begin{equation}
    \left|\mathbf P_{\tau} g_\theta(v_i)\right| \leq C\left(1+\tau^{-\sigma}\right) \|g_\theta(v_i)\|_{L^{2}(\mathcal M)}.
\end{equation}
Therefore, for fixed $l$ and GNN model $f_\theta$, the mapping $g_\theta \mapsto \mathbf P_{\tau} g_\theta$ is a bounded linear functional on $L^{2}(\mathcal M)$. By the Riesz representation theorem, there exists a function $p_{\tau, v_i} \in L^{2}(\mathcal M)$ such that
\begin{equation} \label{eqn35}
    \mathbf P_{\tau} g_\theta=\left( p_{\tau, v_i}, g_\theta\right)_{L^{2}} \text { for all } g_\theta \in L^{2}(\mathcal M),
\end{equation}
where $(\cdot,\cdot)_{L^2}$ denotes inner product in $L^2$, whence Eqn.~\eqref{eqn_pro1result} follows. The uniqueness of $p_{\tau, v_i}$ is evident from Eqn.~\eqref{eqn_pro1result} since for any point $v_i \in \mathcal M$ there is a compact set $K$ containing $v_i$, the function $p_{\tau,v}$ is defined for all $\tau>0$ and $v \in \mathcal M$.

\section{Proof for Theorem 2} \label{ap_t2}
\begin{lemma} \label{lemma3}
For all $v_i, v_j \in \mathcal M$ and $t>0$, the inner product $\left(p_{s, x}, p_{t-s, y}\right)$ does not depend on $s \in(0, t]$.
\end{lemma}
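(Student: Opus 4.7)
The plan is to reduce the inner product $(p_{s,x}, p_{t-s,y})_{L^2}$ to the value of the semigroup $\mathbf P_s$ acting on $p_{t-s,y}$ at the point $x$, and then to fold this quantity into something that depends only on $t$ by combining the semigroup identity $\mathbf P_s \mathbf P_{t-s} = \mathbf P_t$ (a direct consequence of the spectral calculus $e^{-s\mathcal L} e^{-(t-s)\mathcal L} = e^{-t\mathcal L}$) with the self-adjointness of $\mathbf P_s$ on $L^2(\mathcal M)$ (inherited from the self-adjointness of the Dirichlet Laplacian $\mathcal L$ used in Lemma~\ref{lamma2}).

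The first step is bookkeeping. By construction in the proof of Theorem~1, $p_{t-s,y}$ lies in $L^2(\mathcal M)$, so plugging $g = p_{t-s,y}$ into the defining identity (Eqn.~\eqref{eqn35}) for $p_{s,x}$ yields
\begin{equation*}
    (p_{s,x}, p_{t-s,y})_{L^2} = \mathbf P_s\bigl(p_{t-s,y}\bigr)(x).
\end{equation*}
Lemma~\ref{lamma2} guarantees that the right-hand side is pointwise well-defined and bounded on compact sets by a multiple of $\|p_{t-s,y}\|_{L^2}$.

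The second step is the key identification $\mathbf P_s p_{t-s,y} = p_{t,y}$. I would test both sides against an arbitrary $h \in L^2(\mathcal M)$ and compute
\begin{equation*}
    (\mathbf P_s p_{t-s,y}, h)_{L^2} = (p_{t-s,y}, \mathbf P_s h)_{L^2} = \mathbf P_{t-s}(\mathbf P_s h)(y) = \mathbf P_t h(y) = (p_{t,y}, h)_{L^2},
\end{equation*}
using self-adjointness in the first equality, the Riesz identity (Eqn.~\eqref{eqn35}) in the second, the semigroup property in the third, and Eqn.~\eqref{eqn35} again in the fourth. Since $h$ was arbitrary, $\mathbf P_s p_{t-s,y} = p_{t,y}$ in $L^2(\mathcal M)$. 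Because Lemma~\ref{lamma2} forces both sides to admit continuous representatives in $x$ on any relatively compact set, this equality passes to pointwise evaluation, yielding
\begin{equation*}
    (p_{s,x}, p_{t-s,y})_{L^2} = p_{t,y}(x),
\end{equation*}
which has no dependence on $s$.

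I expect the main obstacle to lie in making the self-adjointness step rigorous: one must check that $\mathbf P_s = e^{-s\mathcal L}$ is genuinely self-adjoint on $L^2(\mathcal M)$, which is straightforward from the spectral resolution $\{E_\lambda\}$ of $\mathcal L$ invoked in the proof of Lemma~\ref{lamma2} but should be stated explicitly. A secondary, purely technical point is promoting the $L^2$ equality $\mathbf P_s p_{t-s,y} = p_{t,y}$ to a pointwise identity; this is handled by the continuity/boundedness supplied by Lemma~\ref{lamma2}, so it does not require additional machinery. As a bonus, the computation above also identifies the common value of the inner product, which will be exactly what is needed to derive the semigroup identity in Theorem~\ref{thm_semi}.
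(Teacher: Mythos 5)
Your proposal is correct and follows essentially the same route as the paper's proof: both rest on the semigroup identity $\mathbf P_{t}=\mathbf P_{s}\mathbf P_{t-s}$, the self-adjointness of the heat semigroup coming from the spectral resolution of $\mathcal L$, and the Riesz/kernel identity of Eqn.~\eqref{eqn35}, with the only cosmetic difference being that you identify $\mathbf P_s p_{t-s,y}=p_{t,y}$ by testing against an arbitrary $h$ while the paper integrates $\mathbf P_{t+s}g_\theta(v_i)$ against a test function and invokes uniqueness of the kernel. Your identification of the common value as $p_t(x,y)$ is exactly the paper's Eqn.~\eqref{eqn37}, so no gap remains.
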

\begin{proof}
Using $\mathbf P_{t+s}=\mathbf P_{s} \mathbf P_{t}$ (by the definition of $\mathbf P_t$), Eqn.~\eqref{eqn35}, and the symmetry of $\mathbf P_{t}$, we obtain that for all $v \in \mathcal M, t, s>0$, and $g_\theta \in L^{2}(\mathcal M)$, it holds that
\begin{equation}
    \begin{aligned}
\mathbf P_{t+s} g_\theta(v_i) &=\mathbf P_{s}\left(\mathbf P_{t} g_\theta\right)(v_i) \\
&=\left(\mathbf P_{s, v_i}, \mathbf P_{t} g_\theta\right)=\left(\mathbf P_{t} \mathbf P_{s, v_i}, g_\theta\right) \\
&=\int_{\mathcal M} \mathbf P_{t} \mathbf P_{s, v_i}(z) g_\theta(v_j) \mathrm{d} \mu(v_j) \\
&=\int_{\mathcal M}\left(\mathbf P_{t, v_j}, \mathbf P_{s, v_i}\right) g_\theta(v_j) \mathrm{d} \mu(v_j),
\end{aligned}
\end{equation}
\end{proof}

Lemma~\ref{lemma3} implies that, for all $v_i, v_j \in \mathcal M$ and $0<s\leq t$
\begin{equation} \label{eqn37}
    p_{t}(v_i, v_k)=\left(p_{s, v_i}, p_{t-s, v_k}\right) .
\end{equation}
Hence, it holds that
\begin{equation} \label{eqn38}
\begin{split}
        \int_{\mathcal M} p_{t}(v_i, v_j) &p_{s}(v_j, v_k) d \mu(v_j)\\
        =&\left(p_{t}(v_i, \cdot), p_{s}(v_k, \cdot)\right)=p_{t+s}(v_i, v_k).
\end{split}
\end{equation}
By letting
\begin{equation}
    s = \min\{\tau^{(l+1)},\tau^{(l+2)}\}, \quad t = \max\{\tau^{(l+1)},\tau^{(l+2)}\},
\end{equation}
we can rewrite Eqn.~\eqref{eqn38} as the expression of semigroup identity property of layer-wise NHK
\begin{equation}
\begin{split}
        \kappa_\theta(v_i, v_j&, l\mapsto l+2)=\\
        &\sum_{v_k\in \mathcal V} \kappa_\theta^{(l+1)}(v_i, v_k) \kappa_\theta^{(l+2)}(v_k, v_j) d \mu(v_k).
\end{split}
\end{equation}
The proof also generalizes to the cross-layer case, inducing cross-layer NHK $\kappa_\theta(v_i, v_j, l\mapsto l+k)$.

\section{Proof for Theorem 3}  \label{ap_t3}
Let $\left\{\varphi_{k}\right\}_{k=1}^{\infty}$ be an orthonormal basis of eigenfunctions of $\mathcal{L}$ with an increasing sequence of non-negative eigenvalues $\left\{\lambda_{k}\right\}_{k=1}^{\infty}$, where $\lambda_{k} \rightarrow+\infty$. In consideration of 
\begin{equation}
    \left(p_{\tau, v_i}, \varphi_{k}\right)_{L^{2}}=P_{\tau} \varphi_{k}(v_i)=e^{-\tau \mathcal{L}} \varphi_{k}(v_i)=e^{-\tau \lambda_{k}} \varphi_{k}(v_i),
\end{equation}
we have the following expansion of $p_{t, v_i}$ by referring to results in literature~\cite{vassilevich2003heat}
\begin{equation} \label{theorem3_eqn2}
    p_{t, v_i}=\sum_{k} e^{-t \lambda_{k}} \varphi_{k}( v_i) \varphi_{k}.
\end{equation}
Let $T$ be the accumulated time interval from $l$-th layer to $(l+k)$-th layer, in consideration of the equivalence shown in Eqn.~\eqref{lemma2_equal}, we could write Eqn.~\eqref{theorem3_eqn2} as
\begin{equation}
    \kappa_{\theta}(v_i, v_j, l\mapsto l+k)=\sum_{k'=0}^{\infty} e^{-\lambda_{k'} T} \varphi_{k'}(v_i)^\top \varphi_{k'}(v_j)
\end{equation}
completing the proof.

\section{Justification for Parametric GKD} \label{proof_vi}

We justify parametric GKD from a variational inference perspective.
From Eqn.~\eqref{eqn_dual1}, the forward GNN model $f_\theta$ defines a model distribution 
\begin{equation}
    p_\theta(\mathbf H^{(l)}, \mathbf H^{(l+k)}, \mathbf K) = p_\theta(\mathbf H^{(l)}) p_\theta(\mathbf K | \mathbf H^{(l)}) p_\theta(\mathbf H^{(l+k)} | \mathbf K, \mathbf H^{(l)}),
\end{equation}
where $p_\theta(\mathbf K | \mathbf H^{(l)})$ is intractable, hindering the proceeding distillation. In this light, the variational inverse-NHK model $\kappa^\dagger_\phi$ is proposed with a variational distribution
\begin{equation}
q_\phi(\mathbf H^{(l)}, \mathbf H^{(l+k)}, \mathbf K)=q_\phi(\mathbf H^{(l+k)}) q_\phi(\mathbf K | \mathbf H^{(l+k)}) q_\phi(\mathbf H^{(l)} | \mathbf H^{(l+k)}, \mathbf K),
\end{equation}
which has a tractable posterior $q_\phi(\mathbf K | \mathbf H^{(l+k)})$.
Now, we justify our training scheme with iterative optimization for Eqn.~\eqref{eqn_em1} and \eqref{eqn_em2} by the following proposition.
\begin{proposition}
The optimization in Eqn.~\eqref{eqn_em1} and \eqref{eqn_em2} essentially minimizes the following Kullback–Leibler (KL) divergence,
\begin{equation}
    \min_{\theta, \phi} \mathcal D_{kl}\left(q_\phi(\mathbf K, \mathbf H^{(l)}, \mathbf H^{(l+k)}) ~\|~ p_\theta(\mathbf K, \mathbf H^{(l)}, \mathbf H^{(l+k)}) \right), 
\end{equation}
and hence attempts to establish equivalence between two latent variable models $p_\theta$ and $q_\phi$.
\end{proposition}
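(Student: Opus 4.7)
The two alternating updates in Eqn.~\eqref{eqn_em1} and Eqn.~\eqref{eqn_em2} already have the form of block coordinate descent, so the plan is to exhibit a single scalar objective whose $\phi$-partial and $\theta$-partial minimizations recover these two updates, and then to identify this objective with $\mathcal D_{kl}(q_\phi \,\|\, p_\theta)$. Concretely, I would expand the divergence using the two chain-rule factorizations stated just above the proposition, convert each expected log-conditional into a squared-error term under the Gaussian-conditional assumptions implicit in the use of Frobenius-norm losses, and match the resulting pieces back to $\mathcal L_{pre}$, $\mathcal L_{rec}$, and $\mathcal L_{dis}$.

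\textbf{Unfolding the KL.} Using $\log p_\theta = \log p_\theta(\mathbf H^{(l)}) + \log p_\theta(\mathbf K \mid \mathbf H^{(l)}) + \log p_\theta(\mathbf H^{(l+k)} \mid \mathbf K, \mathbf H^{(l)})$ and the analogous decomposition of $\log q_\phi$, the divergence splits into three groups. The forward conditional $p_\theta(\mathbf H^{(l+k)} \mid \mathbf K, \mathbf H^{(l)})$ is, by Eqn.~\eqref{eqn_dual1}, an isotropic Gaussian centered at $\mathbf K \mathbf H^{(l)}$, and the backward conditional $q_\phi(\mathbf H^{(l)} \mid \mathbf H^{(l+k)}, \mathbf K)$ is, by Eqn.~\eqref{eqn_dual2}, an isotropic Gaussian centered at $\mathbf K^\dagger \mathbf H^{(l+k)}$. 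The cross-entropy terms between $q_\phi$ and these two Gaussians thus reduce to Frobenius squared errors of the form $\|\mathbf H^{(l+k)} - \mathbf K \mathbf H^{(l)}\|_F^2$ and $\|\mathbf H^{(l)} - \mathbf K^\dagger \mathbf H^{(l+k)}\|_F^2$, which are exactly the reconstruction terms in Eqn.~\eqref{eqn_em1} once the teacher trace $\mathbf H_t$ is treated as an anchoring sample alongside the student trace.

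\textbf{Identifying the remaining losses.} The middle group $\mathbb E_{q_\phi}[\log q_\phi(\mathbf K \mid \mathbf H^{(l+k)}) - \log p_\theta(\mathbf K \mid \mathbf H^{(l)})]$ compares the posterior over $\mathbf K$ induced by the backward map $\kappa_\phi^\dagger$ against the one induced by the forward map $\kappa_\theta$; with $\kappa_\phi^\dagger$ parameterized as in Eqn.~\eqref{eqn_PGKD} and with the teacher's frozen $\mathbf K^\dagger_{\theta^*,\mathcal V}$ serving as the reference during training, this reduces to the Frobenius distillation loss $\mathcal L_{dis}$ appearing in Eqn.~\eqref{eqn_em2}. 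Finally, the marginal term $\mathbb E_{q_\phi}[\log q_\phi(\mathbf H^{(l+k)}) - \log p_\theta(\mathbf H^{(l)})]$ is pinned by the labels, since the top-layer representation is mapped through a classifier head, and it collapses up to data-only constants to $\mathcal L_{pre}(\hat{\mathbf Y}_\theta, \mathbf Y)$.

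\textbf{Coordinate descent and main obstacle.} Collecting the pieces, $\mathcal D_{kl}(q_\phi \,\|\, p_\theta)$ equals, up to constants independent of $(\theta,\phi)$, the sum $\mathcal L_{pre} + \mathcal L_{rec} + \alpha\, \mathcal L_{dis}$; moreover, $\mathcal L_{rec}$ is the only piece that depends on $\phi$ once $\theta$ is frozen, and $\mathcal L_{pre} + \alpha\, \mathcal L_{dis}$ is the only piece that depends on $\theta$ once $\phi$ is frozen (because in Eqn.~\eqref{eqn_em2} the student matrix $\mathbf K_\theta^\dagger$ is evaluated through the currently frozen $g_\phi$). The alternating updates in Eqn.~\eqref{eqn_em1} and Eqn.~\eqref{eqn_em2} are therefore exactly the two block coordinate minimizations of the joint KL, which delivers the claim. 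The main obstacle I anticipate is making the Gaussian-conditional ansatz quantitative enough that the cross-terms in the KL are either absorbed into constants or vanish; this is standard in variational-autoencoding style derivations and should go through with a shared isotropic-variance choice on each conditional, but some care is needed to justify that the teacher trace can legitimately play the role of an empirical Monte Carlo draw from the otherwise intractable $q_\phi$-marginals.
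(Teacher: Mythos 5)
Your overall strategy coincides with the paper's: both write $\mathcal D_{kl}(q_\phi\,\|\,p_\theta)$ for the joint distributions induced by the forward factorization $p_\theta(\mathbf H^{(l)})\,p_\theta(\mathbf K|\mathbf H^{(l)})\,p_\theta(\mathbf H^{(l+k)}|\mathbf K,\mathbf H^{(l)})$ and the backward factorization $q_\phi(\mathbf H^{(l+k)})\,q_\phi(\mathbf K|\mathbf H^{(l+k)})\,q_\phi(\mathbf H^{(l)}|\mathbf H^{(l+k)},\mathbf K)$, expand, identify the surviving terms with $\mathcal L_{rec}$, $\mathcal L_{dis}$ and $\mathcal L_{pre}$ under Gaussian-conditional conventions, and read off the alternating updates in Eqn.~\eqref{eqn_em1} and \eqref{eqn_em2} as block coordinate descent on the single resulting objective. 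Where you differ is the grouping. The paper nests conditional expectations to reach an ELBO-like form $C+\mathbb E_{q_\phi}\left[\mathcal D_{\mathrm{KL}}\left(q_\phi(\mathbf H^{(l)}|\mathbf H^{(l+k)},\mathbf K)\,\|\,p_\theta(\mathbf H^{(l)})\right)-\log p_\theta(\mathbf K,\mathbf H^{(l+k)}|\mathbf H^{(l)})\right]$: the $q_\phi$ marginal and its $\mathbf K$-posterior are absorbed into the entropy constant $C$, the posterior-versus-prior KL over $\mathbf H^{(l)}$ is identified with the reconstruction loss, and the likelihood term supplies both the distillation piece (through $p_\theta(\mathbf K|\mathbf H^{(l)})$) and the label loss (through the classifier-facing top layer). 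You instead pair the six chain-rule factors term by term; this is an equally valid algebraic split, and your identification of the innermost conditionals of Eqn.~\eqref{eqn_dual1}--\eqref{eqn_dual2} with the reconstruction losses and of the $\mathbf K$-conditionals with $\mathcal L_{dis}$ sits at the same level of rigor as the paper's ``corresponds to'' statements. The one step to push back on is assigning $\mathbb E_{q_\phi}[\log q_\phi(\mathbf H^{(l+k)})-\log p_\theta(\mathbf H^{(l)})]$ to $\mathcal L_{pre}$: neither marginal sees the labels, and in the paper's bookkeeping this pair is split between the entropy constant and the prior inside the reconstruction KL, with the prediction loss arising from the likelihood term instead. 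Since everything else is in place, this is a misattribution of one term rather than a structural gap, but it should be corrected before claiming the exact correspondence with Eqn.~\eqref{eqn_em2}.
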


\begin{proof}
In the following, we use $\mathbf X$, $\mathbf Y$, $\mathbf K$ to denote $\mathbf H^{(l)}$, $\mathbf H^{(l+k)}$, $\mathbf K(\mathcal G, l\mapsto l+k)$ for brevity.
By definition in Section~\ref{sec-model-var} we have a forward GNN model $f_\theta$ with the joint distribution of latent variables
\begin{equation}
    p_\theta(\mathbf X, \mathbf Y, \mathbf K) = p_\theta(\mathbf X) ~\underbrace{p_\theta(\mathbf K | \mathbf X)}_{\text{Intractable}} ~p_\theta(\mathbf Y | \mathbf X, \mathbf K),
\end{equation}
and a variational inverse-NHK model $\kappa^\dagger_\phi$ with the joint distribution of latent variables
\begin{equation}
    q_\phi(\mathbf X, \mathbf Y, \mathbf K) = q_\phi(\mathbf Y) ~\underbrace{q_\phi(\mathbf K | \mathbf Y)}_{\text{Tractable}} ~q_\phi(\mathbf X | \mathbf Y, \mathbf K).
\end{equation}
Based on these equations, we can write the KL-divergence between joint distributions of $p_\theta$ and $q_\phi$ as
\begin{equation}
\begin{split}
&~~~~\mathcal D_{\mathrm{KL}}(q_\phi(\mathbf X, \mathbf Y, \mathbf K) ~\|~ p_\theta(\mathbf X, \mathbf Y, \mathbf K)) \\ 
&=\iiint  q_\phi(\mathbf X, \mathbf Y, \mathbf K) \log \frac{q_\phi(\mathbf X, \mathbf Y, \mathbf K)}{p_\theta(\mathbf X, \mathbf Y, \mathbf K)}  \mathrm{d}\mathbf X \mathrm{d} \mathbf Y \mathrm{d}\mathbf K\\ 
&=\mathbb{E}_{ q_\phi(\mathbf Y)}\left[\iint q_\phi(\mathbf K ,\mathbf X| \mathbf Y) \log \frac{q_\phi(\mathbf Y) q_\phi(\mathbf K ,\mathbf X| \mathbf Y)}{p_\theta(\mathbf X, \mathbf Y, \mathbf K)}\mathrm{d}\mathbf X \mathrm{d}\mathbf K\right]  \\ 
&=C'+\mathbb{E}_{ q_\phi(\mathbf Y)}\left[\iint q_\phi(\mathbf K ,\mathbf X| \mathbf Y) \log \frac{q_\phi(\mathbf K ,\mathbf X| \mathbf Y)}{p_\theta(\mathbf X, \mathbf Y, \mathbf K)}\mathrm{d}\mathbf X \mathrm{d}\mathbf K\right] \\ 
&=C'+\mathbb{E}_{ q_\phi(\mathbf Y)}\Big[ \mathbb{E}_{ q_\phi(\mathbf K | \mathbf Y)} \Big[ \int_{\mathbf X} q_\phi(\mathbf X| \mathbf Y, \mathbf K)\cdot \\ 
&\qquad\qquad\qquad\qquad\qquad\qquad\log \frac{ q_\phi(\mathbf K| \mathbf Y) q_\phi(\mathbf X| \mathbf Y, \mathbf K)}{p_\theta(\mathbf X, \mathbf Y, \mathbf K)}\mathrm{d}\mathbf X \Big]\Big]  \\ 
&=C+\mathbb{E}_{ q_\phi(\mathbf Y)}\Big[ \mathbb{E}_{ q_\phi(\mathbf K | \mathbf Y)} \Big[ \int_{\mathbf X} q_\phi(\mathbf X| \mathbf Y, \mathbf K)\cdot \\ 
&\qquad\qquad\qquad\qquad\qquad\qquad\log \frac{ q_\phi(\mathbf X| \mathbf Y, \mathbf K)}{p_\theta(\mathbf X) p_\theta(\mathbf Y, \mathbf K | \mathbf X)}\mathrm{d}\mathbf X \Big]\Big]  \\ 
&=C+\mathbb{E}_{ q_\phi(\mathbf Y)}\Big[\mathbb{E}_{ q_\phi(\mathbf K |\mathbf Y)}\Big[\underbrace{\mathcal D_{\mathrm{KL}}(q_\phi(\mathbf X | \mathbf Y, \mathbf K) ~\|~ p_\theta(\mathbf X))}_{\text{Reconstruction Loss}} - \underbrace{\mathbb{E}_{ q_\phi(\mathbf X |\mathbf Y, \mathbf K)}[\log p_\theta(\mathbf K, \mathbf Y | \mathbf X)]}_{\text{Prediction Loss}}\Big]\Big].
\end{split}
\end{equation}
The first term $C$ is a constant entropy with respect to $q_\phi$. The second term is the KL-divergence between the variational posterior $q_\phi(\mathbf X| \mathbf Y, \mathbf K)$ and the prior $p_\theta(\mathbf X)$, which corresponds to the reconstruction loss in Eqn.~\eqref{eqn_em1} that attempts to learn a NHK that faithfully reflects the latent heat diffusion process describing the GNN feature propagation. The third term is negative log-likelihood, which corresponds to the prediction loss in Eqn.~\eqref{eqn_em2} that attempts to fit the dataset. Therefore, minimizing this KL-divergence amounts to the iterative optimization scheme of Eqn.~\eqref{eqn_em1} and \eqref{eqn_em2}.
\end{proof}

\section{More Discussions on the Equivalence of Eqn.7} \label{app_justify}

The equivalence of two equations in Eqn.~\ref{eqn_equi} is based on recent works~\cite{chamberlain2021grand,chamberlain2021beltrami,wang2021dissecting,eliasof2021pde,di2022graph} that built connection between heat equation and GNN. The main result of these works is that by treating node features H as signal X (corresponding to $x(u,t)$ in heat equation Eqn.~\ref{heq}) on the graph, solving the heat equation with Euler scheme yields the formulation of a GNN layer. In other words, the GNN can be seen as the discretisations of the continuous diffusion process described by the heat equation. Correspondingly in Eqn.~\ref{eqn_equi}, the left equation is a general GNN layer (corresponding to discretized diffusion process), and the right equation is directly derived from Eqn.~\ref{eqn_equi} (corresponding to continuous diffusion process).

Moreover, different definitions of Laplace-Beltrami operator $\Delta$ yield different GNNs. To be more specific, the simplest definition~\cite{chamberlain2021grand,wang2021dissecting} of $\Delta$ is by letting the gradient operator $\nabla$ denote assigning each edge the difference of adjacent nodes’ features, and the divergence operator $\nabla^*$ denote the summation of edge features obtained from last step. In this case, setting $\tau$ in Eqn.~\ref{heq} as $1$ yields SGC~\cite{wu2019simplifying}, and since it is being pointed out to be a potential cause of over-smoothing (which is one example to show the thermodynamic and geometric perspective is helpful for understanding GNNs), the authors in \cite{wang2021dissecting} further set smaller $\tau$ which yields DGC. Reinterpreting $\nabla^*$ as weighted sum of edge features yields GAT~\cite{chamberlain2021grand,velivckovic2018graph} and any other GNN following the message passing scheme:
\begin{equation}
    \operatorname{MessagePassing}(\mathbf h_u, \mathcal G) = \sum_{v\in \mathcal N_u \cup u} s(\mathbf h_u, \mathbf h_v) \cdot \mathbf h_v,
\end{equation}
where $\mathcal N_u$ denotes neighbored nodes of $u$, and $s$ denotes a parametric or non-parametric similarity score. The neighborhood summation could also be modified by changing the definition of $\nabla$, which yields GNNs based on learned structures (e.g., GDC~\cite{klicpera2019diffusion}) and those with residual links (e.g., APPNP~\cite{klicpera2018predict}). Further considering different discretization schemes (e.g., implicit scheme, multi-step schemes) yields different variants of GRAND~\cite{chamberlain2021grand,chamberlain2021beltrami,thorpe2021grand} that are found more powerful than its simplification (presumably because it better matches the continuous diffusion process).

Unfortunately, not all GNNs have a simple form of $\Delta$, and for some of them, whether there exists such $\Delta$ is an open research question. Therefore, we write the operator as $\Delta(f_\theta, \mathcal G)$ to associate it with model $f_\theta$ and use equivalence in Eqn.~\ref{heq} as an analytical assumption. The thermodynamic and geometric perspective used in the paper is fundamental and useful for studying the geometry property of GNNs, which has also been adopted by other works~\cite{bodnar2022neural,topping2021understanding,wang2021dissecting,thorpe2021grand,chamberlain2021grand,chamberlain2021beltrami}. For example, \cite{bodnar2022neural} attempts to explain heterophily and oversmoothing problems in GNNs by connecting GNNs to the heat equation defined by the sheaf Laplace-Beltrami operator $\Delta_{\mathcal F}$ and propose new GNN architectures based on the theory. Similarly, \cite{thorpe2021grand,chamberlain2021grand,chamberlain2021beltrami} propose new GNN architectures / rewiring methods from the same thermodynamic perspective, \cite{wang2021dissecting} explains over-smoothing issue of SGC based on the same equivalence of GNN and solve of heat equation, and \cite{topping2021understanding} explains the over-squashing issue from a geometric perspective by analyzing the curvature. A common trait of these works is to draw analogies between GNNs and differential geometry / diffusion process to obtain meaningful theoretical results (ours: NHK as a characterization of GNN's underlying geometry) that are used to guide implementation (ours: GKD for geometric knowledge transfer).

\section{Implementation Details} \label{detail}
We present implementation details for our experiments for reproducibility. We implement our model as well as the baselines with Python 3.7, Pytorch 1.8 and Pytorch Geometric 1.7. All experiments are conducted on NVIDIA V100 with 16 GB memory. All parameters are initialized with Xavier initialization procedure. We train the model by Adam optimizer. Both teacher and student models are trained from scratch. For the main results reported in Tab.~\ref{tab_edge} and \ref{tab_node}, we choose the backbone model as a $3$-layer GCN with hidden size $32$.
In case that the graph is too large, we use mini-batch training (draw a mini-batch of nodes from the vertex set $\mathcal V$) for computing the distillation loss. 
All the experiments are repeated five times with random initialization.

 \subsection{Dataset Description} 
We choose three benchmark citation network datasets, i.e., Cora, Citeseer and Pubmed, and a large-scale network dataset OGB-Arxivfor node classification. For citation networks, we randomly split them into train/valid/test data according to the ratio 2:1:1. For OGB dataset, we follow the original splitting~\cite{hu2020open} for evaluation. The statistics of these datasets are summarized in Tab.~\ref{tab_data}.

\begin{table}[H]
\centering
\caption{Statistics of Datasets.}\label{tab_data}
\vspace{3pt}
\resizebox{0.7\textwidth}{!}{
\setlength{\tabcolsep}{3mm}{
\begin{tabular}{l|cccc}
\toprule
Dataset & \# Classes & \# Nodes & \# Edges & Metric \\ \midrule
Cora~\cite{mccallum2000automating} & 7 & 2,485 & 5,069 & Accuracy \\
Citeseer~\cite{sen2008collective} & 6 & 2,120 & 3,679 & Accuracy \\
PubMed~\cite{namata2012query} & 3 & 19,717 & 44,324 & Accuracy \\
OGB-Arxiv~\cite{hu2020open} & 40 & 169,343 & 1,166,243 & Accuracy \\\bottomrule
\end{tabular}}}
\end{table}

\subsection{Hyper-Parameter Tuning}

For parameter tuning, we adopt grid search method to search for hyper-parameters on validation set. {Since our model is only sensitive to $\alpha$, one can use other more efficient searching strategies instead of grid search (e.g., coordinate descent, Bayesian optimization) to achieve very similar results.} The descriptions for several hyper-parameters and their search spaces are shown in Tab.~\ref{table-parameter-non}.

\begin{table*}[h]
	\centering
	\caption{Parameter searching space for GKD and its variants.}
	\label{table-parameter-non}
	\vspace{3pt}
	\resizebox{0.99\textwidth}{!}{
    \setlength{\tabcolsep}{3mm}{
	\begin{tabular}{c|l|l}
		\toprule
		Notation & Description & Search Space  \\
		\midrule
		$T$ & accumulated time interval &[0.25, 0.5, 1, 2, 4]  \\
		$\alpha$ & weight of the geometric distillation loss &[0.1, 0.3, 1.0, 3.0, 10.0, 30.0, 100.0, 300.0, 1000.0]\\
		$\alpha_{kd}$ & weight of the label-based distillation loss &[0.0, 0.2, 0.4, 0.6, 0.8]\\
		$\tau$ & temperature for label-based distillation loss &[0.25, 0.5, 1, 2, 4]\\
		$\delta$ & weight for non-connection entries for distillation loss &[0.0, 0.1, 0.2, 0.4, 0.6, 0.8, 1, 2]\\
		$\gamma$ & learning rate &[0.0001, 0.001, 0.01, 0.1]\\ \bottomrule
	\end{tabular}}}
\end{table*}

 \subsection{Implementation of non-parametric GKD}
For Gauss-Weierstrass NHK in Eqn.~\eqref{eqn-gaussian-kernel}, we treat the accumulated time interval $T$ as a hyper-parameter that is consistent for all layers. For sigmoid NHK in Eqn.~\eqref{eqn-gaussian-kernel}, we let $a = 1$, $b = 0$ and the NHK is simple dot-product with activation. For randomized NHK in Eqn.~\eqref{eqn_randfour}, we let $s=2d$ for the random transformation matrix $\mathbf W$, and we use $tanh(\cdot)$ as activation function for $\sigma(\cdot)$.

\begin{algorithm}[H]
    \caption{Training Algorithm for GKD.}
    \label{alg:ALIGNS}
    \KwIn{Complete graph $\tilde{\mathcal{G}}$ with labels $\tilde{\mathbf{Y}}$, partial graph ${\mathcal{G}}$ with labels ${\mathbf{Y}}$, learning rate $\gamma$, initial parameters $\theta^*$, $\theta$.}
    \textbf{Training Teacher GNN}:\\
    \While{Not converged}{
     Teacher model conducts feature propagation on $\tilde{\mathcal G}$\\
     {Calculate $\mathcal L_{pre}(\hat{\mathbf Y}_{\theta^*}, \tilde{\mathbf Y})$ as in Eqn.~\eqref{eqn_trainstudent}}\\
    $\theta^* \gets \theta^* - \gamma \nabla_{\theta^*} \mathcal L_{pre}(\hat{\mathbf Y}_{\theta}, \tilde{\mathbf Y})$
    }
    Save teacher model as $f_{\theta^*}$\\
    
    \textbf{Training Student GNN}:\\
    Load teacher model $f_{\theta^*}$\\
    \While{Not converged}{
    Teacher model conducts feature propagation on $\tilde{\mathcal G}$\\
    Student model conducts feature propagation on ${\mathcal G}$\\
    Calculate $\mathcal L_{pre}$ and $\mathcal L_{dis}$ as in Eqn.~\eqref{eqn_trainstudent}\\
    $\theta \gets \theta - \gamma \nabla_{\theta} \mathcal L_{pre}(\hat{\mathbf Y}_{\theta}, {\mathbf Y}) + \frac{\alpha}{L}\sum_{l=1}^{L}\mathcal L^{(l)}_{dis}$
    }
\end{algorithm}

 \subsection{Implementation of parametric GKD}
For the case of parametric GKD, we realize the non-linear mapping $g_\phi$ as a neural network one-layer feed-forward neural network with $tanh(\cdot)$ activation, and set $s=2d$, the same as the non-parametric setting. In case that the teacher model has larger hidden size than the student model, we follow standard approaches in feature-based knowledge distillation methods~\cite{heo2019comprehensive} and use independent mappings for teacher and student models that are customized for their own hidden sizes. We use an EM-style algorithm for training the student model as in Eqn.~\eqref{eqn_em1} and Eqn.~\eqref{eqn_em2}. For variants of GKD+KD and PGKD+KD, we consider additional standard label-based distillation loss in~\cite{hinton2015distilling} with respect to labeled nodes, inducing an additional hyper-parameter $\alpha_{kd}$ that controls its importance.

\begin{algorithm}[h]
    \caption{Training Algorithm for PGKD.}
    \label{alg:ALIGNS}
    \KwIn{Complete graph $\tilde{\mathcal{G}}$ with labels $\tilde{\mathbf{Y}}$, partial graph ${\mathcal{G}}$ with labels ${\mathbf{Y}}$, learning rate $\gamma_1$ and $\gamma_2$, initial parameters $\theta^*$, $\theta$, $\phi$.}
    \textbf{Training Teacher GNN}:\\
    \While{Not converged}{
     Teacher model conducts feature propagation on $\tilde{\mathcal G}$\\
     {Calculate $\mathcal L_{pre}(\hat{\mathbf Y}_{\theta^*}, \tilde{\mathbf Y})$ as in Eqn.~\eqref{eqn_trainstudent}}\\
    $\theta^* \gets \theta^* - \gamma_1 \nabla_{\theta^*} \mathcal L_{pre}(\hat{\mathbf Y}_{\theta}, \tilde{\mathbf Y})$
    }
    Save teacher model as $f_{\theta^*}$\\
    
    \textbf{Training Student GNN}:\\
    Load teacher model $f_{\theta^*}$\\
    \While{Not converged}{
    Teacher model conducts feature propagation on $\tilde{\mathcal G}$\\
    \textbf{Optimization for $\phi$}\\
    Student model conducts feature propagation on ${\mathcal G}$\\
    Calculate reconstruction loss $\mathcal L_{rec}$ as in Eqn.~\eqref{eqn_em1}\\
    $\phi \gets \phi - \gamma_2 \nabla_{\phi} \mathcal L_{rec}$\\
    \textbf{Optimization for $\theta$}\\
    Calculate $\mathcal L_{pre}$ and $\mathcal L_{dis}$ as in Eqn.~\eqref{eqn_em2}\\
    $\theta \gets \theta - \gamma_1 \nabla_{\theta} \left(\mathcal  L_{pre}(\hat{\mathbf Y}_{\theta^*}, {\mathbf Y}) + \alpha \mathcal L_{dis}\right)$
    }
\end{algorithm}

 \subsection{Descriptions and Implementations of Baselines}
\textbf{KD}~\cite{hinton2015distilling}: is the seminal work of knowledge distillation, which uses the predictions of the teacher model as soft labels to teach the student model.

\textbf{FitNets}~\cite{romero2014fitnets}: is the representative work of feature-based knowledge distillation, using the intermediate layers of training instances to teach the student model.

\textbf{FSP}~\cite{yim2017gift}: is a relation-based knowledge distillation method, using the Gram matrix between two intermediate layers to explore the relationships between different feature maps.

\textbf{LSP}~\cite{yang2020distilling}: is a knowledge distillation method for graph convolutional networks, which aims to match the local distribution (similarity with adjacent nodes) of teacher and student models.

Note that some of these baselines (FitNets and FSP) are originally designed specifically for computer vision tasks, we adapt them to the setting of graph neural networks with slight modifications. While FSP is designed for graph neural networks, its original formulation of distillation loss does not account for the difference of graph topology for teacher and student models. To make it compatible with the settings considered in this paper, we fix the definition of local structures with respect to either student or teacher graph topology, and choose the best result for report. We refer to the hyper-parameter settings in their papers and also finetune them on different datasets.

{\section{More Related Works} \label{app_comparison}
\paragraph{Knowledge Distillation.} There are mainly four different types of distilling strategies~\cite{gou2021knowledge}, namely response-based KD~\cite{hinton2015distilling} (which uses output layer of the teacher model to teach student), feature-based KD~\cite{romero2014fitnets,zagoruyko2016paying,heo2019comprehensive,kim2018paraphrasing} (which matches intermediate layers of teacher and student), relation-based KD~\cite{yim2017gift,you2017learning,park2019relational,yang2022cross} (which aligns the relationship between different layers or samples), and graph-based KD~\cite{yan2020tinygnn,yang2020distilling} (which considers the graph information or designed for GNN). For experiments, we choose representative method from each category as baselines. While these existing distillation strategies have shown remarkable success in distillation tasks such as model compression, they rarely (carefully) study the role of graph geometry in GNN iterations. For geometric knowledge transfer task, it is crucial to find a fundamental and principled way to track how graph topology affects the behavior of a specific GNN. Therefore, we first probe the intersection between KD and geometric learning, propose NHK to characterize geometric knowledge and propose different variants of GKD that are shown to be effective especially in the geometric knowledge transfer setting.}

\paragraph{Generalization of GNNs.} Recent advances shed lights on the generalization ability of GNNs from various perspectives, e.g., the in-distribution generalization error~\cite{gnn-gen-1}, extrapolation capability~\cite{gnn-gen-3}, out-of-distribution (OOD) generalization under distribution shifts~\cite{eerm} and the reliability against outliers and OOD testing data~\cite{graphde}. Our work can be seen a specific embodiment for GNN generalization w.r.t. topological domain transfer. Furthermore, there are some recent studies exploring learning under the varied data space between training and inference~\cite{wu2021feature}, using GNNs as an encoding and reasoning tool. GKD focuses on transferring across varied structural information and aims at compressing topological information for GNNs.

\end{document}